\definecolor{citecolor}{RGB}{0,180,0}
\definecolor{linkcolor}{RGB}{180,0,0}
\definecolor{urlcolor}{RGB}{0,0,180}
\newcommand{\removed}[1]{}
\newcommand{\httpurl}[1]{\href{http://#1}{\nolinkurl{#1}}}
\newcommand{\httpsurl}[1]{\href{https://#1}{\nolinkurl{#1}}}
\newtheorem{theorem}{Theorem}
\newtheorem{corollary}{Corollary}
\newtheorem{lemma}{Lemma}
\theoremstyle{definition}
\newtheorem{defn}{Definition}
\newtheorem{assumption}{Assumption}
\newtheorem{remark}{Remark}
\newtheorem{example}{Example}
\newcommand{\N}{\mathbb{N}}
\newcommand{\R}{\mathbb{R}}
\newcommand{\cE}{\mathcal{E}}
\newcommand{\cD}{\mathcal{D}}
\newcommand{\cV}{\mathcal{V}}   
\newcommand{\cX}{\mathcal{X}}
\newcommand{\cH}{\mathcal{H}}
\newcommand{\cL}{\mathcal{L}}
\renewcommand{\L}{\mathcal{L}}
\newcommand{\bE}{\mathbb{E}}
\DeclareMathOperator*{\E}{\bE}      
\DeclareMathOperator*{\Var}{\mathrm{Var}}
\DeclareMathOperator*{\argmin}{arg\,min}
\title{An Agnostic View on the Cost of Overfitting in (Kernel) Ridge Regression}
\author{
    Lijia Zhou \\
    University of Chicago\\
    \texttt{zlj@uchicago.edu}
    \And
    James B. Simon\\
    UC Berkeley and Generally Intelligent\\
    \texttt{james.simon@berkeley.edu}\\
    \And
    Gal Vardi\\
    TTI-Chicago and Hebrew University\\
    \texttt{galvardi@ttic.edu}
    \And
    Nathan Srebro \\
    TTI-Chicago\\
    \texttt{nati@ttic.edu}
}
\begin{document}

\maketitle

\begin{abstract}
We study the cost of overfitting in noisy kernel ridge regression (KRR), which we define as the ratio between the test error of the interpolating ridgeless model and the test error of the optimally-tuned model. We take an ``agnostic'' view in the following sense: we consider the cost as a function of sample size for any target function, even if the sample size is not large enough for consistency or the target is outside the RKHS. We analyze the cost of overfitting under a Gaussian universality ansatz using recently derived (non-rigorous) risk estimates in terms of the task eigenstructure. Our analysis provides a more refined characterization of benign, tempered and catastrophic overfitting \citep[cf.][]{taxonomy-overfitting}.
\end{abstract}

\section{Introduction}

The ability of large neural networks to generalize, even when they overfit to noisy training data \citep{NTS:real-inductive-bias, ZBHRV:rethinking, reconcile:interpolation}, has significantly challenged our understanding of the effect of overfitting. A starting point for understanding overfitting in deep learning is to understand the issue in kernel methods, possibly viewing deep learning through their kernel approximation \citep{jacot2020kernel}.
%
%
Indeed, there is much progress in understanding the effect of overfitting in kernel ridge regression and ridge regression with Gaussian data. It has been shown that the test error of the minimal norm interpolant can approach Bayes optimality and so overfitting is ``benign'' \citep{bartlett2020benign, muthukumar:interpolation, uc-interpolators, wang2021tight, donhauser2022fastrate}. In other situations such as Laplace kernels and ReLU neural tangent kernels, the interpolating solution is not consistent but also not ``catastrophically'' bad, which falls into an intermediate regime called ``tempered'' overfitting \citep{taxonomy-overfitting}. 

However, the perspective taken in this line of work differs from the agnostic view of statistical learning. These results typically focus on asymptotic behavior and consistency of a well-specified model, asking how the limiting behavior of interpolating learning rules compares to the Bayes error (the smallest risk attainable by any measurable function of the feature $x$). In contrast, the agnostic PAC model \citep{agnostic-pac,haussler1992decision,shalev2014understanding} does not require any assumption on the conditional distribution of the label $y$. In particular, the conditional expectation $\E[y|x]$ is not necessarily a member of the hypothesis class and it does not need to have small Hilbert norm in the Reproducing Kernel Hilbert Space (RKHS). Instead, the learning rule is asked to find a model whose test risk can compete with the smallest risk \emph{within} the hypothesis class, which can be quite high if 
no predictor in the hypothesis class can 
attain the Bayes error. In these situations, the agnostic PAC model can still provide a meaningful learning guarantee.

Furthermore, we would like to isolate the effect of overfitting (i.e.~underregularizing, and choosing to use a predictor that fits the noise, instead of compromising on empirical fit and choosing a predictor that balances empirical fit with complexity or norm) from the difficulty of the learning problem and appropriateness of the model irrespective of overfitting (i.e.~even if we were to not overfit and instead optimally balance empirical fit and norm, as in ridge regression).  A view which considers only the risk of the overfitting rule \citep[e.g.][]{taxonomy-overfitting} confounds these two issues.  Instead, we would like to study the direct effect of overfitting: how much does it hurt to overfit and use ridgeless regression {\em compared to} optimally tuned ridge regression.

In this paper, we take an agnostic view to the direct effect of overfitting in (kernel) ridge regression. Rather than comparing the asymptotic risk of the interpolating ridgeless model to the Bayes error, we compare it to the best ridge model in terms of population error as a function of sample size, and we measure the cost of overfitting as a ratio. We show that the cost of overfitting can be bounded by using only the sample size and the effective ranks of the covariance, even when the risk of the optimally-tuned model is high relative to the Bayes error. Our analysis applies to any target function (including ones with unbounded RKHS norm)
and
recovers the matching upper and lower bounds from \cite{bartlett2020benign}, which allows us to have a more refined understanding of the benign overfitting. In addition to benign overfitting, we show that the amount of ``tempered'' overfitting can also be understood using the cost of interpolation, and we derive the necessary and sufficient condition for ``catastrophic'' overfitting \citep{taxonomy-overfitting}. Combining these results leads to a refined notion of benign, tempered, and catastrophic overfitting (focusing on the difference versus the optimally tuned predictor), and a characterization as a function of sample size $n$ based on computing the effective rank $r_k$ at some index $k$. We further apply our results to the setting of inner product kernels in the polynomial regime \citep{ghorbani2021linearized, mei2022generalization, misiakiewicz2022spectrum} and recover the multiple descent curve. 


\removed{
\paragraph{Related Work}

The work of \cite{bartlett2020benign} proves consistency result for the minimal $\ell_2$ norm interpolant in linear regression. \cite{uc-interpolators} recovers the consistency result using uniform convergence techniques, which are extended to analyze minimal $\ell_1$ and $\ell_p$ norm interpolants in \citet{wang2021tight, donhauser2022fastrate}. All of these results assume a well-specified model and require that the features are Gaussian (or at least sub-Gaussian and independent after a linear transformation). The more recent work of \cite{moreau-envelope} relaxes the well-specified model assumption by a multi-index model assumption. However, their result still crucially depends on Gaussian features because the proof technique is based on the Gaussian Minimax Theorem (GMT). Our perspective differs from this line of work, which focuses on consistency and does not provide a useful understanding of overfitting when no ridge model can be consistent. In contrast, we show that the test error of the interpolating ridgeless solution can be as competitive as the optimally regularized model even when consistency is out of the question, and we show bounds that lead to a more refined understanding of tempered overfitting when the interpolating ridgeless solution is not benign.

Prior works \citep{hastie2019surprises, wu2020optimal, jacot2020kernel, canatar2021spectral, loureiro2021learning, mel2021theory, richards2021asymptotics, eigenlearning} have established the precise asymptotics of the test error of ridge regression, with closed-form expressions in terms of the spectrum of the covariance matrix. Some of these results are proven rigorously using random matrix theory but require that the features are a linear transformation of random vectors whose coordinates are independent and have bounded high-order moments \citep[e.g.,][]{hastie2019surprises, wu2020optimal}, or only focus on the settings of random feature regression \citep{mei2022random} and inner-product kernels \citep{misiakiewicz2022spectrum}. Other works explicitly make a universality assumption or apply non-rigorous techniques from statistical mechanics. These results agree with each other in the setting of well-specified linear regression with Gaussian features, and numerical experiments \citep[e.g.,][]{jacot2020kernel, eigenlearning} suggest that the predictions based on statistical mechanics should hold more generally. Though establishing a rigorous version of the statistical mechanics predictions that extend beyond the setting of random matrix theory technique is a crucial avenue for future work, our focus in this paper is to simplify the closed-form risk predictions of \cite{eigenlearning} into interpretable learning guarantees that are both agnostic and non-asymptotic to reveal new insights about the effect of overfitting, similar to the recent work of \cite{taxonomy-overfitting}.
}

\section{Problem Formulation}

Let $\cX$ be an abstract input space and $K: \cX \times \cX \to \R$ a positive semi-definite kernel\footnote{i.e.: (i) $\forall x, x' \in \cX, \,\, K(x, x') = K(x', x)$, and (ii) $\forall n \in \N, \, x_1, ..., x_n \in \cX, \, c_1, ..., c_n \in \R$, it holds that $\sum_{i=1}^n \sum_{j=1}^n c_i c_j K(x_i, x_j) \geq 0$.}.

\subsection{Bi-criterion Optimization in KRR}

Given a data set $D_n$ consisting of $(x_1, y_1) , ..., (x_n, y_n) \in \cX \times \R$ sampled from some unknown joint distribution $\cD$, in order to find a predictor with good test error $R(f)$, we solve the bi-criterion optimization: 
\begin{equation} \label{eqn:bi-criterion}
    \min_{f \in \cH} \hat{R}(f), \| f\|_{\cH}
\end{equation}
where $\| f\|_{\cH}$ is the Hilbert norm in the RKHS and the test error and training error (in square loss) of a predictor $f$ is given by
\begin{equation*} \label{eqn:test-train-error}
    R(f) := \E \left[ (f(x) - y)^2 \right]
    \quad \text{and} \quad 
    \hat{R}(f) := \frac{1}{n} \sum_{i=1}^n  (f(x_i) - y_i)^2. 
\end{equation*}
The Pareto-frontier of the bi-criterion problem \eqref{eqn:bi-criterion} corresponds to the regularization path $\{\hat{f}_{\delta}\}_{\delta \geq 0}$ given by the sequence of problems:
\begin{equation*}
    \hat{f}_{\delta}
    = 
    \argmin_{f \in \cH} \, \hat{R}(f) + \frac{\delta}{n} \| f\|_{\cH}^2.
\end{equation*}
By the representation theorem, $\hat{f}_{\delta}$ has the explicit closed form:
\begin{equation} \label{eqn:KRR-defn}
    \hat{f}_{\delta} (x) = K(D_n, x)^T \left( K(D_n, D_n) + \delta I_n\right)^{-1} Y
\end{equation}
where $K(D_n, x) \in \R^{n}, K(D_n, D_n) \in \R^{n \times n}, Y \in \R^n$ are given by $[K(D_n, x)]_i = K(x_i, x)$, $[K(D_n, D_n)]_{i,j} = K(x_i, x_j)$ and $[Y]_i = y_i$. The interpolating ``ridgeless'' solution (minimal norm interpolant) is the extreme Pareto point and obtained by taking $\delta \to 0^+$:
\begin{equation*}
     \hat{f}_{0}
    = 
    \argmin_{f \in \cH: \hat{R}(f) = 0} \, \| f\|_{\cH}.
\end{equation*}
Even though $\hat{f}_{0}$ has the minimal norm among all interpolants, the norm of $\hat{f}_{0}$ will still be very large because it needs to memorize all the noisy training labels. In this paper, we are particularly interested in the generalization performance of the ridgeless solution $\hat{f}_0$, which minimizes the training error in the bi-criterion problem \eqref{eqn:bi-criterion} too much.

\subsection{Mercer's Decomposition}

Though the setting for KRR is very generic, we can understand it as (linear) ridge regression.
By Mercer's theorem \citep{mercer-decomposition}, the kernel admits the decomposition
\begin{equation} \label{eqn:mercer-decomposition}
    K(x, x') = \sum_{i} \lambda_i \phi_i(x) \phi(x')
\end{equation}
where $\phi_i: \cX \to \R$ form a complete orthonormal basis satisfying $\E_{x}[\phi_i(x) \phi_j(x)] = 1$ if $i=j$ and 0 otherwise, and the expectation is taken with respect to the marginal distribution of $x$ given by $\cD$. For example, if $\cX = \{x_1, ..., x_M\}$ has finite cardinality $M$ and $x$ is uniformly distributed over $\cX$, then \eqref{eqn:mercer-decomposition} can be found by the spectral decomposition of the matrix $K(\cX, \cX) \in \R^{M \times M}$ given by $[K(\cX, \cX)]_{i,j} = K(x_i, x_j).$ When $x$ is uniformly distributed over the sphere in $\R^d$ or the boolean hypercube $\{-1, 1\}^d$, then $\{\phi_i\}$ can be taken to be the spherical harmonics or the Fourier-Walsh (parity) basis. In the case that $K$ is the Gaussian kernel or polynomial kernel, the eigenvalues $\{\lambda_i\}$ has closed-form expression in terms of the modified Bessel function or the Gamma function \citep{minh2006mercer}. 

Therefore, instead of viewing the feature $x$ as an element of $\cX$, we can consider the potentially infinite-dimensional real-valued vector $\psi(x) = (\sqrt{\lambda_1}\phi_1(x), \sqrt{\lambda_2}\phi_2(x), ...)$ and denote the design matrix $\Psi = [\psi(x_1), \psi(x_2), ...]^T$. Then we can write $K(x,x') = \langle \psi(x), \psi(x')\rangle$ and understand the predictor in \eqref{eqn:KRR-defn} as 
\begin{equation*}
    \begin{split}
        \hat{f}_{\delta}(x) 
        &= \psi(x)^T \Psi^T (\Psi \Psi^T + \delta I_n)^{-1} Y \\
        &= \langle \hat{w}_{\delta}, \psi(x) \rangle
    \end{split}
\end{equation*}
where $\hat{w}_{\delta} = \Psi^T (\Psi \Psi^T + \delta I_n)^{-1} Y$ is simply the ridge regression estimate with respect to the data set $(\Psi, Y)$. For a predictor $f$ of the form $f(x) = \langle w, \psi(x) \rangle$, its Hilbert norm is given by $\| f\|_{\cH} = \| w\|_2$.

The Bayes-optimal target function is $f_*(x) = \E_{(x,y)\sim\mathcal{D}}[y|x]$.
We may expand this function in the kernel eigenbasis as $f_*(x) = \sum_i v_i \phi_i(x)$, where $\{v_i\}$ are eigencoefficients.
Let the \textit{noise level} be $\sigma^2 = \E_{(x,y)\sim\mathcal{D}}[(y-f_*(x))^2]$.

\subsection{Closed-form Risk Estimate for (Kernel) Ridge Regression}

A great number of recent theoretical works have converged on a powerful set of closed-form equations which estimate the test risk of KRR in terms of task eigenstructure \citep{hastie2019surprises, wu2020optimal, jacot2020kernel, canatar2021spectral, loureiro2021learning, mel2021theory, richards2021asymptotics}.
We shall use the risk estimate from these works as our starting point.
These equations rely on (some variant of) the following Gaussian design ansatz:
\begin{assumption}[Gaussian design ansatz] \label{ass:universality}
    When sampling $(x,\cdot) \sim \mathcal{D}$, the eigenfunctions are either \textit{Gaussian} in the sense that $\psi(x) \sim \mathcal{N}(0,\text{diag}(\{\lambda_i\}))$,
    or else we have \textit{Gaussian universality} in the sense that the expected test risk is unchanged if we replace
    $\psi(x)$ with $\tilde{\psi}(x)$, where $\tilde{\psi}$ is Gaussian in this manner.
\end{assumption}
Remarkably, \cref{ass:universality} appears to hold even for many \textit{real datasets}: predictions computed for Gaussian design agree excellently with kernel regression experiments with real data \citep{canatar2021spectral,eigenlearning,wei:2022-more-than-a-toy}.
We will take \cref{ass:universality} henceforth.

We now state the ``omniscient risk estimate'' presented by this collection of works.\footnote{
We adopt the notation of \citet{eigenlearning}, but the risk estimates of all mentioned works are equivalent.
We take the term ``omniscient risk estimate'' from \citet{wei:2022-more-than-a-toy}.
}
First, let the \textit{effective regularization constant} $\kappa_\delta$ be the unique nonnegative solution to
\begin{equation} \label{eqn:kappa}
    \sum_{i} \frac{\lambda_i}{\lambda_i + \kappa_{\delta}} + \frac{\delta}{\kappa_{\delta}} = n.
\end{equation}
Using $\kappa_{\delta}$, we can define 
\begin{equation} \label{eqn:learnability}
    \cL_{i, \delta} = \frac{\lambda_i}{\lambda_i + \kappa_{\delta}}
    \quad \text{and} \quad
    \cE_{\delta} = \frac{n}{n - \sum_{i} \cL_{i,\delta}^2},
\end{equation}
where we refer to $\cL_{i, \delta}$ as the \textit{learnability of mode $i$} and $\cE_{\delta}$ as the \textit{overfitting coefficient.}
The expected test risk over datasets is then given approximately by
\begin{equation} \label{eqn:KRR-risk}
    R(\hat{f}_{\delta})
    \approx
    \tilde{R}(\hat{f}_{\delta})
    :=
    \cE_{\delta} \left( \sum_{i} (1 - \cL_{i, \delta})^2 v_i^2
    + \sigma^2
    \right).
\end{equation}
The ``$\approx$'' in \eqref{eqn:KRR-risk} can be given several meanings.
Firstly, it becomes an equivalence in an appropriate asymptotic limit in which $n$ and the number of eigenmodes in a given eigenvalue range both grow proportionally large \citep{hastie2019surprises,bach:2023-rf-eigenframework}.
Secondly, with fixed task eigenstructure, the error incurred can be bounded by a decaying function of $n$ \citep{cheng:2022-dimension-free-ridge-regression}.
Thirdly, numerical experiments attest that the error is small even at quite modest $n$ \citep{canatar2021spectral,eigenlearning}.
For the rest of this paper, we will simply treat it as an equivalence, formally proving facts about the omniscient risk estimate $\tilde{R}(\hat{f}_{\delta})$.
Thus, our results follow by analyzing the expression from \eqref{eqn:KRR-risk}.

\section{Cost of Overfitting}

The sensible and traditional approach to learning using a complexity penalty, such as the Hilbert norm $\|f\|_\cH$, is to use a Pareto point (point on the regularization path) of the bi-criteria problem \eqref{eqn:bi-criterion} that minimizes some balanced combination of the  empirical risk and penalty (the ``structural risk'') so as to ensure small population risk.  Assumptions about the problem can help us choose which Pareto optimal point, i.e.~what value of the tradeoff parameter $\delta$, to use.  Simpler and safer is to choose this through validation: calculate the Pareto frontier (aka regularization path) on half the training data set, and choose among these Pareto points by minimizing the ``validation error'' on the held-out half of the training set. Here we do not get into these details, and simply compare to the best Pareto point:
\begin{equation*}\label{eqn:deltastar}
    R(\hat{f}_{\delta^*}) = \inf_{\delta \geq 0} R(\hat{f}_\delta).
\end{equation*}
Although we cannot find $\hat{f}_{\delta^*}$ exactly empirically, it is useful as an oracle, and studying the gap versus this ideal Pareto point provides an upper bound on the gap versus any possible Pareto point (i.e.~with any amount of ``ideal'' regularization).  And in practice, as well as theoretically, a validation approach as described above will behave very similar to $\hat{f}_{\delta^*}$. We therefore define the {\bf cost of overfitting} as the (multiplicative) gap between the interpolating predictor $\hat{f}_0$ and the optimaly regularized $\hat{f}_{\delta^*}$:
\begin{defn}
Given any data distribution $\cD$ over $\cX \times \R$ and sample size $n \in \N$, we define the cost of overfitting as
\begin{equation*}
    C(\cD, n) := \frac{R(\hat{f}_0)}{\inf_{\delta \geq 0} R(\hat{f}_{\delta})},
    \quad
    \text{and its prediction based on (\ref{eqn:KRR-risk})}:
    \;\;
    \tilde{C}(\cD, n) := \frac{\tilde{R}(\hat{f}_0)}{\inf_{\delta \geq 0} \tilde{R}(\hat{f}_{\delta})}
\end{equation*}
\end{defn}

It is possible to directly analyze $R(\hat{f}_0)$ and $R(\hat{f}_{\delta^*})$ (or their predictions based on \eqref{eqn:KRR-risk}) in order to study the cost of overfitting. However, any bound on $R(\hat{f}_0)$ or $R(\hat{f}_{\delta^*})$ will necessarily depend on the target function. Instead, we show that there is a much simpler argument to bound the cost of overfitting.

\begin{theorem} \label{theorem:cost-of-interpolation}
Consider $\cE_0$ defined in \eqref{eqn:learnability} with $\delta = 0$, then it holds that
\begin{equation} \label{eqn:cost-of-interpolation}
\tilde{C}(\cD, n) \leq \cE_0.
\end{equation}
\end{theorem}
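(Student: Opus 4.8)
The plan is to avoid estimating the target-dependent quantities $\tilde R(\hat f_0)$ and $\inf_{\delta\ge 0}\tilde R(\hat f_\delta)$ altogether, and instead establish the pointwise comparison $\tilde R(\hat f_0)\le \cE_0\,\tilde R(\hat f_\delta)$ for \emph{every} $\delta\ge 0$; taking the infimum over $\delta$ on the right and dividing then gives $\tilde C(\cD,n)\le \cE_0$. Write $B_\delta:=\sum_i(1-\cL_{i,\delta})^2 v_i^2$ for the target-dependent part of \eqref{eqn:KRR-risk}, so that $\tilde R(\hat f_\delta)=\cE_\delta\,(B_\delta+\sigma^2)$. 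The statement then reduces to two monotonicity facts: (i) $\cE_\delta\ge 1$ for all $\delta\ge 0$; and (ii) $\delta\mapsto B_\delta$ is nondecreasing on $[0,\infty)$. Granting these, for any $\delta\ge 0$,
\[
\tilde R(\hat f_0)=\cE_0\,(B_0+\sigma^2)\ \le\ \cE_0\,(B_\delta+\sigma^2)\ \le\ \cE_0\cdot\cE_\delta\,(B_\delta+\sigma^2)\ =\ \cE_0\,\tilde R(\hat f_\delta),
\]
where the first inequality uses (ii) and the second uses (i). So all the work is in (i) and (ii).

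For (i), I would observe that from \eqref{eqn:learnability} each $\cL_{i,\delta}=\lambda_i/(\lambda_i+\kappa_\delta)$ lies in $[0,1)$ as soon as $\kappa_\delta>0$, hence $\cL_{i,\delta}^2\le \cL_{i,\delta}$; summing and using the defining equation \eqref{eqn:kappa} gives $\sum_i \cL_{i,\delta}^2\le \sum_i \cL_{i,\delta}=n-\delta/\kappa_\delta\le n$, so the denominator $n-\sum_i\cL_{i,\delta}^2$ of $\cE_\delta$ lies in $(0,n]$ and $\cE_\delta\ge 1$. (If $\kappa_0=0$ then the kernel has at most $n$ nonzero eigenvalues, $\cE_0=+\infty$, and \eqref{eqn:cost-of-interpolation} is vacuous, so we may assume $\kappa_0>0$ throughout.)

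For (ii), I would first argue that $\delta\mapsto\kappa_\delta$ is nondecreasing. Viewing the left-hand side of \eqref{eqn:kappa} as a function $G(\kappa,\delta)$, it is strictly decreasing and continuous in $\kappa$ on $(0,\infty)$ for each fixed $\delta$, and strictly increasing (affine) in $\delta$ for each fixed $\kappa>0$; since $G(\kappa_\delta,\delta)=n$ along the solution curve, $\delta_1<\delta_2$ forces $G(\kappa_{\delta_1},\delta_2)>G(\kappa_{\delta_2},\delta_2)$, and strict monotonicity of $G(\cdot,\delta_2)$ then yields $\kappa_{\delta_1}\le\kappa_{\delta_2}$. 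Consequently each $\cL_{i,\delta}$ is nonincreasing in $\delta$, each $1-\cL_{i,\delta}\in(0,1]$ is nondecreasing, each $(1-\cL_{i,\delta})^2$ is nondecreasing, and therefore $B_\delta=\sum_i(1-\cL_{i,\delta})^2 v_i^2$ is nondecreasing in $\delta$, which is (ii).

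I expect the only mildly delicate point to be the monotonicity of $\kappa_\delta$ in step (ii) — essentially a one-line implicit-function argument on \eqref{eqn:kappa}, together with checking that the solution is well defined in the overparametrized regime and recording the vacuous case $\kappa_0=0$; everything else is immediate from the closed form \eqref{eqn:KRR-risk}. Conceptually, the bound works because $\hat f_0$ and every ridge predictor sit above the same noise floor $\sigma^2$ while $\hat f_0$ has the smallest bias term $B_\delta$, so the entire price of overfitting is carried by the single scalar $\cE_0$, which — crucially — does not depend on the target coefficients $\{v_i\}$ at all.
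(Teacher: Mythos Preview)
Your proposal is correct and follows essentially the same argument as the paper: both use $\cE_\delta\ge 1$ together with the monotonicity of $\kappa_\delta$ (hence of $B_\delta=\sum_i(1-\cL_{i,\delta})^2 v_i^2$) in $\delta$ to conclude $\tilde R(\hat f_0)=\cE_0(B_0+\sigma^2)\le \cE_0\,\tilde R(\hat f_\delta)$ for every $\delta\ge 0$. You simply spell out the justifications for $\cE_\delta\ge 1$ and for $\delta\mapsto\kappa_\delta$ being nondecreasing more explicitly than the paper does.
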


\begin{proof}
Observe that
\begin{equation*}
\begin{split}
    \tilde{R}(\hat{f}_{\delta^*})
    &= \inf_{\delta \geq 0} \,\,   \cE_{\delta} \left( \sum_{i} (1 - \cL_{i, \delta})^2 v_i^2 + \sigma^2 \right) \\
    &\geq \inf_{\delta \geq 0} \,\, \sum_{i} (1 - \cL_{i, \delta})^2 v_i^2 + \sigma^2 \\
    &= \sum_{i} (1 - \cL_{i, 0})^2 v_i^2 + \sigma^2  \\
\end{split}
\end{equation*}
where we use the fact that $(1 - \cL_{i, \delta})^2$ decreases as $\kappa_{\delta}$ decreases, and $\kappa_{\delta}$ decreases as $\delta$ decreases. The proof concludes by observing $\sum_{i} (1 - \cL_{i, 0})^2 v_i^2 + \sigma^2 = \tilde{R}(\hat{f}_0)/\cE_0$.
\end{proof}

Indeed, \eqref{eqn:kappa} and \eqref{eqn:learnability} used to define $\cE_0$ does not depend on the target coefficients. It is also straightforward to check that if $v_i = 0$, then $\tilde{R}(\hat{f}_0) = \cE_0 \sigma^2$ and $\tilde{R}(\hat{f}_{\delta^*}) = \sigma^2$ by choosing $\delta^* = \infty$, and $\tilde{C}(\cD, n) = \cE_0$ for any $n$. This shows that \eqref{eqn:cost-of-interpolation} is the tightest agnostic bound on the cost of overfitting:
\begin{equation*}
    \forall_{P(x)} \; \cE_0 = \sup_{P(y|x)} \tilde{C}(\cD, n)
\end{equation*} 
where $\cE_0$ on the left-hand-side depends only on the marginal $P(x)$, while $\tilde{C}(\cD,n)$ depends on both the marginal $P(x)$ and the conditional $P(y|x)$. 


More generally, it is clear that we have the lower bound \begin{equation*} \label{eq:bound when consistent}
\tilde{C}(\cD, n) \geq \cE_0 \frac{\sigma^2}{\tilde{R}(\hat{f}_{\delta^*})}    
\end{equation*} 
due to the non-negativity of $v_i^2$ in \eqref{eqn:KRR-risk}. 
Thus, from the above and Theorem~\ref{theorem:cost-of-interpolation}, we have $\frac{\sigma^2}{\tilde{R}(\hat{f}_{\delta^*})} \leq \frac{\tilde{C}(\cD, n)}{\cE_0} \leq 1$.
Therefore, if $\frac{\sigma^2}{\tilde{R}(\hat{f}_{\delta^*})} \to 1$ as $n \to \infty$, namely, the optimal-tuned ridge is consistent, then $\frac{\tilde{C}(\cD, n)}{\cE_0} \to 1$. That is, in this case $\cE_0$ precisely captures the cost of overfitting.

If the optimal-tuned ridge is not consistent, \eqref{eqn:cost-of-interpolation} might be a loose upper bound on $\tilde{C}(\cD, n)$. However, under our assumption, even in this case $\cE_0$ still captures the qualitative noisy overfitting behavior in the following sense:
If $\lim_{n \to \infty} \cE_0 = 1$, we have benign overfitting, i.e. $\tilde{C} \to 1$, regardless of the target; If $\lim_{n \to \infty} \cE_0 = \infty$ and $\sigma^2 > 0$, then we have catastrophic overfitting, i.e. $\tilde{C} \to \infty$, regardless of the target; If $1 < \lim_{n \to \infty} \cE_0 < \infty$ then overfitting is either benign or tempered.



Finally, we note that
the argument in the proof of Theorem~\ref{theorem:cost-of-interpolation} shows something more: for any $\delta \leq \delta^*$, it holds that $\tilde{R}(\hat{f}_{\delta}) \leq \cE_{\delta} \tilde{R}(\hat{f}_{\delta^*}) \leq \cE_0 \tilde{R}(\hat{f}_{\delta^*})$. Therefore, the quantity $\cE_0$ bounds the cost of overfitting not only for the interpolating solution, but also for any ridge model with a sufficiently small regularization parameter $\delta$. Consequently, if $\cE_0$ is close to one, then the risk curve will become flat once all of the signal is fitted (for example, see Figure 1 of \cite{optimistic-rates}), exhibiting the double descent phenomenon instead of the classical U-shape curve \citep{reconcile:interpolation}. Similar results on the flatness of the generalization curve are proven in \citet{tsigler2020benign} and \citet{optimistic-rates}.

\subsection{Benign Overfitting}

In this section, we discuss when $\cE_0$ can be close to 1 and so overfitting is benign. Note that the target coefficients play no role at all in our analysis. To further upper bound the cost of overfitting, we will introduce the notion of effective rank \citep{bartlett2020benign}.

\begin{defn}
The effective ranks of a covariance matrix with eigenvalues $\{\lambda_i\}_{i=1}^{\infty}$ in descending order are defined as
\begin{equation*} \label{eqn:defn-effective-ranks}
    r_k = \frac{\sum_{i > k} \lambda_i}{\lambda_{k+1}} 
    \qquad \text{and} \qquad
    R_k := \frac{\left(\sum_{i > k} \lambda_i \right)^2}{\sum_{i > k} \lambda_i^2}.
\end{equation*}
\end{defn}

The two effective ranks are closely related to each other by the relationship $r_k \leq R_k \leq r_k^2$ and are equal if $\Sigma$ is the identity matrix \citep{bartlett2020benign}. Roughly speaking, the minimal norm interpolant can approximate the target in the span of top $k$ eigenfunctions and use the remaining components of $x$ to memorize the residual. A large effective rank ensures that the small eigenvalues of $\Sigma$ are roughly equal to each other and so it is possible to evenly spread out the cost of overfitting into many different directions. More precisely, we show the following finite-sample bound on $\cE_0$, which decreases to 1 as $n$ increases if $k = o(n)$ and $R_k = \omega(n)$:

\begin{restatable}{theorem}{CoIUpperBound} \label{theorem:e0-upper-bound}
For any $k < n$, it holds that
\begin{equation} \label{eqn:e0-upper-bound}
    \cE_0 \leq \left( 1 - \frac{k}{n}\right)^{-2} \left( 1- \frac{n}{R_k}\right)_+^{-1}.
\end{equation}
\end{restatable}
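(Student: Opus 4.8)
\emph{Plan.} I would work directly with the closed form $\cE_0 = n/\bigl(n - \sum_i \cL_{i,0}^2\bigr)$, where $\cL_{i,0}=\lambda_i/(\lambda_i+\kappa_0)$ and $\kappa_0$ is the unique solution of $\sum_i \cL_{i,0}=n$ (for $\delta=0$ the $\delta/\kappa_0$ term in \eqref{eqn:kappa} vanishes). Write $\kappa:=\kappa_0$, $\cL_i:=\cL_{i,0}$, and abbreviate the truncated moments $t:=\sum_{i>k}\lambda_i$ and $q:=\sum_{i>k}\lambda_i^2$, so $R_k=t^2/q$. If $R_k\le n$ the claimed bound is vacuous, so assume $R_k>n$; then $R_k\le\#\{i>k:\lambda_i>0\}$ by Cauchy--Schwarz, so there are more than $n$ nonzero eigenvalues, $\kappa>0$, and $\cE_0$ is finite.

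\emph{Steps 1--2 (rewriting and a first Cauchy--Schwarz).} Using $\sum_i\cL_i=n$ and the identity $\cL_i(1-\cL_i)=\kappa\lambda_i/(\lambda_i+\kappa)^2\ge 0$, I would write
\[ n-\sum_i\cL_i^2 \;=\; \sum_i\cL_i(1-\cL_i)\;\ge\;\sum_{i>k}\cL_i(1-\cL_i)\;=\;\kappa\sum_{i>k}\frac{\lambda_i}{(\lambda_i+\kappa)^2}, \]
discarding the $i\le k$ terms (keeping them could only sharpen the bound; dropping them is what makes the result target-free). Then, splitting $\tfrac{\lambda_i}{\lambda_i+\kappa}=\sqrt{\lambda_i}\cdot\tfrac{\sqrt{\lambda_i}}{\lambda_i+\kappa}$, Cauchy--Schwarz over $i>k$ gives $\sum_{i>k}\tfrac{\lambda_i}{(\lambda_i+\kappa)^2}\ge\rho^2/t$ with $\rho:=\sum_{i>k}\cL_i$, and since each $\cL_i\le 1$ we get $\rho=n-\sum_{i\le k}\cL_i\ge n-k$. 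Hence
\[ \cE_0 \;\le\; \frac{nt}{\kappa\,(n-k)^2}. \]

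\emph{Step 3 (lower bound on $\kappa$ --- the crux).} It remains to show $\kappa$ cannot be too small. I would get this from a \emph{second} Cauchy--Schwarz, splitting $\lambda_i=\sqrt{\lambda_i(\lambda_i+\kappa)}\cdot\sqrt{\lambda_i/(\lambda_i+\kappa)}$ over $i>k$:
\[ t^2 \;\le\; \Bigl(\sum_{i>k}\lambda_i(\lambda_i+\kappa)\Bigr)\Bigl(\sum_{i>k}\tfrac{\lambda_i}{\lambda_i+\kappa}\Bigr) \;=\; (q+\kappa t)\,\rho \;\le\; (q+\kappa t)\,n, \]
using $\rho\le\sum_i\cL_i=n$. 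Rearranging gives $\kappa\ge \tfrac{t}{n}-\tfrac{q}{t}>0$ (positive because $R_k>n$), i.e.\ $\tfrac{t}{\kappa}\le \tfrac{n}{1-n/R_k}$. Substituting into Step~2, $\cE_0\le\tfrac{n}{(n-k)^2}\cdot\tfrac{n}{1-n/R_k}=(1-k/n)^{-2}(1-n/R_k)_+^{-1}$, which is the claim.

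\emph{What is hard.} Steps 1--2 are routine once one spots the $\cL_i(1-\cL_i)$ rewriting. The delicate point is Step 3, a lower bound on the effective ridge $\kappa$. The obvious attempt --- bounding $\sum_{i>k}\tfrac{\lambda_i}{\lambda_i+\kappa}\le\tfrac{t}{\lambda_{k+1}+\kappa}$ (flattening the tail) to get $\kappa\ge t/n-\lambda_{k+1}$ --- throws away the tail decay and replaces $q/t$ by the larger $\lambda_{k+1}$, so it does not recover the sharp $R_k$-dependence. The second Cauchy--Schwarz above retains the full second moment $q=\sum_{i>k}\lambda_i^2$ and is exactly tight in the isotropic case, which is how the bound recovers \cite{bartlett2020benign}.
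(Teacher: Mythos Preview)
Your proof is correct and follows essentially the same route as the paper. The paper isolates your Step~3 as a separate lemma (Lemma~\ref{lem:kappa-bound}) giving the lower bound $\kappa_0 \ge (1-n/R_k)\,t/n$ via the same Cauchy--Schwarz splitting $\lambda_i=\sqrt{\lambda_i(\lambda_i+\kappa)}\cdot\sqrt{\lambda_i/(\lambda_i+\kappa)}$, and then uses the identical $\cL_i(1-\cL_i)$ rewriting together with the same tail Cauchy--Schwarz $(n-k)^2\le t\sum_{i>k}\lambda_i/(\lambda_i+\kappa)^2$ as in your Steps~1--2; only the order of presentation differs.
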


The conditions that $k = o(n)$ and $R_k = \omega(n)$ are two key conditions for benign overfitting in linear regression \citep{bartlett2020benign}. They require an additional assumption that $r_0 = o(n)$ for consistency, which is sufficient for the consistency of the optimally tuned model when the target is well-specified.  Our Theorem~\ref{theorem:e0-upper-bound} provides a more refined understanding of benign overfitting: at a finite sample $n$, if we can choose a small $k$ such that $R_k$ is large relative to $n$, then the interpolating ridgeless solution is nearly as good as the optimally tuned model, regardless of whether the optimally tuned model can learn the target. Furthermore, we also recover a version of the matching lower bound of Theorem 4 in \citet{bartlett2020benign}, though our proof technique is completely different and simpler since we have a closed-form expression. Since $\cE_0 = \left( 1- \frac{1}{n} \sum_i \cL_{i, 0}^2 \right)^{-1}$, it suffices to lower bound $\frac{1}{n} \sum_i \cL_{i, 0}^2$.

\begin{restatable}{theorem}{CoILowerBound}\label{theorem:e0-lower-bound}
 Fix any $b > 0$. If there exists $k < n$ such that $n \leq k + b r_{k}$, then let $k$ be the first such integer. Otherwise, pick $k = n$. It holds that
\begin{equation}\label{eqn:e0-lower-bound}
    \frac{1}{n} \sum_i \cL_{i, 0}^2 \geq \max \left\{  \frac{1}{(b+1)^2} \left( 1 - \frac{k}{n} \right)^2 \frac{n}{R_{k}}, \left(\frac{b}{b+1}\right)^2 \frac{k}{n} \right\}.
\end{equation}   
\end{restatable}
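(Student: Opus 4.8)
The plan is to lower-bound $\sum_i \cL_{i,0}^2$ directly by splitting the sum at the chosen index $k$: the tail $\sum_{i>k}\cL_{i,0}^2$ will produce the first term in the $\max$, the head $\sum_{i\le k}\cL_{i,0}^2$ the second, and dividing by $n$ at the end gives \eqref{eqn:e0-lower-bound}. Everything rests on two elementary facts drawn from the defining equation $\sum_i\cL_{i,0}=n$ (i.e.\ \eqref{eqn:kappa} with $\delta=0$; note $\kappa_0>0$ since there are infinitely many eigenvalues). First, since $0\le\cL_{i,0}\le1$ and $\cL_{i,0}$ is non-increasing in $i$, for every $j<n$ we have $\sum_{i>j}\cL_{i,0}\ge n-j$, and because $\cL_{i,0}=\frac{\lambda_i}{\lambda_i+\kappa_0}\le\frac{\lambda_i}{\kappa_0}$ this yields the key bound $\kappa_0\le\frac{\sum_{i>j}\lambda_i}{\,n-j\,}$. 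Second, $\frac{\lambda}{\lambda+\kappa_0}$ is non-decreasing in $\lambda$, so $\cL_{i,0}\ge\frac{\lambda_i}{\lambda_{k+1}+\kappa_0}$ whenever $i>k$, and $\cL_{i,0}\ge\cL_{k,0}$ whenever $i\le k$.

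\emph{Tail (first term).} The hypothesis $n\le k+b\,r_k$ is exactly $r_k\ge\frac{n-k}{b}$, equivalently $\lambda_{k+1}\le\frac{b}{n-k}\sum_{i>k}\lambda_i$. Adding the key bound with $j=k$, namely $\kappa_0\le\frac{1}{n-k}\sum_{i>k}\lambda_i$, gives $\lambda_{k+1}+\kappa_0\le\frac{b+1}{n-k}\sum_{i>k}\lambda_i$. Hence
\[
\sum_{i>k}\cL_{i,0}^2\ \ge\ \frac{\sum_{i>k}\lambda_i^2}{(\lambda_{k+1}+\kappa_0)^2}\ \ge\ \frac{(n-k)^2}{(b+1)^2}\cdot\frac{\sum_{i>k}\lambda_i^2}{\big(\sum_{i>k}\lambda_i\big)^2}\ =\ \frac{(n-k)^2}{(b+1)^2\,R_k},
\]
which after division by $n$ is the first term in the $\max$. (When $k=n$ this term is $0$ and nothing is needed.)

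\emph{Head (second term).} Here I would use the \emph{minimality} of $k$. Assuming $k\ge1$ (if $k=0$ the term vanishes), the index $k-1$ fails the defining condition, so $n>(k-1)+b\,r_{k-1}$, i.e.\ $r_{k-1}<\frac{n-k+1}{b}$. The key bound with $j=k-1$ gives $\kappa_0\le\frac{\sum_{i\ge k}\lambda_i}{n-k+1}=\frac{r_{k-1}\lambda_k}{n-k+1}<\frac{\lambda_k}{b}$, whence $\cL_{k,0}=\frac{\lambda_k}{\lambda_k+\kappa_0}\ge\frac{\lambda_k}{\lambda_k+\lambda_k/b}=\frac{b}{b+1}$. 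Since $\cL_{i,0}\ge\cL_{k,0}$ for $i\le k$, we get $\sum_{i\le k}\cL_{i,0}^2\ge k\big(\tfrac{b}{b+1}\big)^2$, i.e.\ the second term after dividing by $n$. The ``otherwise'' case $k=n$ is identical using $j=n-1$: no index below $n$ satisfies the condition, so $r_{n-1}<\tfrac1b$, hence $\kappa_0\le\sum_{i\ge n}\lambda_i=r_{n-1}\lambda_n<\lambda_n/b$ and the same conclusion holds with $k=n$. Taking the larger of the two bounds finishes the proof.

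\emph{Main obstacle.} There is no deep step; the delicacy is that the single index $k$ must play two opposite roles. The defining inequality $n\le k+b\,r_k$ has to be converted into an \emph{upper} bound on $\lambda_{k+1}$ (for the tail), whereas the \emph{minimality} of $k$ has to be converted, via the neighbouring index $k-1$, into an upper bound on $\kappa_0$ and hence a \emph{lower} bound on $\cL_{k,0}$ (for the head). Making the off-by-one indices ($k$ vs.\ $k{+}1$ vs.\ $k{-}1$) and the degenerate endpoints $k=0$ and $k=n$ line up so that the constants come out exactly $\frac{1}{(b+1)^2}$ and $\big(\frac{b}{b+1}\big)^2$ is where the care is required; once the two preliminary facts are in hand, the two displays above are immediate.
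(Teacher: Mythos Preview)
Your proof is correct and follows essentially the same route as the paper: split $\sum_i\cL_{i,0}^2$ at the chosen $k$, use $n\le k+br_k$ to bound $\lambda_{k+1}$ and the upper bound $\kappa_0\le\frac{\sum_{i>j}\lambda_i}{n-j}$ (the paper packages this as Lemma~\ref{lem:kappa-bound}) to control the tail, and use the minimality of $k$ at index $k-1$ to get $\kappa_0<\lambda_k/b$ for the head. The handling of the edge cases $k=0$ and $k=n$ is also the same.
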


For simplicity, we can take $b = 1$ in the lower bound above. We see that $\cE_0$ cannot be close to 1 unless $k$ is small relative to $n$. Even if $k$ is small, the first term in \eqref{eqn:e0-lower-bound} requires $n/R_k$ to be small. Conversely, if both $k/n$ and $n/R_k$ are small, then we can apply Theorem~\ref{theorem:e0-upper-bound} to show that $\cE_0$ is close to 1 and we have identify the necessary and sufficient condition for $\cE_0 \to 1$.

\begin{corollary} \label{corollary:krr-benign}
    For any $n \in \N$, let $k_n$ be the first integer $k < n$ such that $n \leq k + r_k$. Then $\cE_0 \to 1$ if and only if
    \begin{equation} \label{eqn:cost-of-interpolation-benign}
        \lim_{n \to \infty} \frac{k_n}{n} = 0
        \quad \text{and} \quad
        \lim_{n \to \infty} \frac{n}{R_{k_n}} = 0.
    \end{equation}
\end{corollary}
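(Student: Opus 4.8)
The plan is to deduce \Cref{corollary:krr-benign} directly from \Cref{theorem:e0-upper-bound} and \Cref{theorem:e0-lower-bound}, using the identity $\cE_0 = \bigl(1 - \tfrac1n\sum_i \cL_{i,0}^2\bigr)^{-1}$ together with the trivial bound $\cE_0 \ge 1$, which holds because $\cL_{i,0}^2 \le \cL_{i,0}$ and $\sum_i \cL_{i,0} = n$ by \eqref{eqn:kappa}. A preliminary remark is that $k_n$ is always well defined with $k_n \le n-1 < n$: since $\sum_{i>n-1}\lambda_i \ge \lambda_n$ we have $r_{n-1} \ge 1$, so the choice $k = n-1$ already satisfies $n \le k + r_k$. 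This guarantees that both theorems may be applied at the index $k = k_n$.

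For the ``if'' direction, suppose $\tfrac{k_n}{n} \to 0$ and $\tfrac{n}{R_{k_n}} \to 0$. I would substitute $k = k_n$ into \eqref{eqn:e0-upper-bound} to obtain $\cE_0 \le \bigl(1 - \tfrac{k_n}{n}\bigr)^{-2}\bigl(1 - \tfrac{n}{R_{k_n}}\bigr)_+^{-1}$. The first factor tends to $1$ since $\tfrac{k_n}{n}\to 0$, and the second tends to $1$ since $\tfrac{n}{R_{k_n}}\to 0$ (in particular $\tfrac{n}{R_{k_n}}<1$ eventually, so the positive part is inactive). Hence $\limsup_n \cE_0 \le 1$, and combined with $\cE_0 \ge 1$ this forces $\cE_0 \to 1$.

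For the ``only if'' direction, I would invoke \Cref{theorem:e0-lower-bound} with $b = 1$. With this choice the index selected in that theorem is precisely the first $k < n$ with $n \le k + r_k$, i.e.\ exactly the corollary's $k_n$ (and, by the preliminary remark, it never falls back to $k = n$). The theorem then gives
\[
\frac1n\sum_i \cL_{i,0}^2 \;\ge\; \max\!\left\{\tfrac14\Bigl(1-\tfrac{k_n}{n}\Bigr)^2\tfrac{n}{R_{k_n}},\ \tfrac14\tfrac{k_n}{n}\right\}.
\]
Since $\cE_0 \to 1$ is equivalent to $\tfrac1n\sum_i \cL_{i,0}^2 \to 0$, both terms in the max vanish. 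The second term yields $\tfrac{k_n}{n}\to 0$; feeding this back into the first term (so that $\bigl(1-\tfrac{k_n}{n}\bigr)^2 \to 1$) yields $\tfrac{n}{R_{k_n}}\to 0$, completing the equivalence.

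Once \Cref{theorem:e0-upper-bound} and \Cref{theorem:e0-lower-bound} are available, the argument is essentially bookkeeping; the only points needing care are verifying that $k_n$ is well defined with $k_n < n$ so that both bounds apply at $k = k_n$, and observing that the threshold index of \Cref{theorem:e0-lower-bound} with $b=1$ coincides with the corollary's $k_n$. After that the two inequalities pinch $\cE_0$ to $1$ under exactly the stated conditions, so I do not anticipate a genuine obstacle here — the real work was already done in proving the two finite-sample bounds on $\cE_0$.
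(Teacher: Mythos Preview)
Your proposal is correct and matches the paper's intended approach: the paper does not give a standalone proof of this corollary but rather explains in the surrounding text that one takes $b=1$ in \Cref{theorem:e0-lower-bound} to get the ``only if'' direction and applies \Cref{theorem:e0-upper-bound} at $k=k_n$ for the ``if'' direction, exactly as you do. One tiny quibble: your stated justification for $\cE_0 \ge 1$ (that $\cL_{i,0}^2 \le \cL_{i,0}$ and $\sum_i \cL_{i,0}=n$) actually shows the denominator $n-\sum_i \cL_{i,0}^2$ is nonnegative; the bound $\cE_0 \ge 1$ itself comes from the trivial fact $\sum_i \cL_{i,0}^2 \ge 0$.
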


Though Corollary~\ref{corollary:krr-benign} is stated as an asymptotic result, the spectrum is allowed to change with the sample size $n$ and the target function plays no role in condition \eqref{eqn:cost-of-interpolation-benign}. Next, we apply our results to some canonical examples where overfitting is benign.

\begin{example}[Benign covariance from \citet{bartlett2020benign}] \label{example:ilogi}
\[
\lambda_i 
= i^{-1} \log^{-\alpha} i
\quad \text{for some} \,\, \alpha > 0.
\]
In this case, we can estimate
\begin{equation*}
    \begin{split}
        \sum_{i>k} \lambda_i 
        &\geq \int_{k+1}^{\infty} \frac{1}{x \log^{\alpha}x} \, dx = \frac{1}{(\alpha-1) \log^{\alpha-1} (k+1)} \\
        \sum_{i>k} \lambda_i^2
        &\leq \frac{1}{k+1}\int_{k}^{\infty} \frac{1}{x \log^{2\alpha}x} \, dx
        = \frac{1}{(k+1)(2\alpha-1)\log^{2\alpha-1}(k)}
    \end{split}
\end{equation*}
and so
\[
R_k \geq \frac{(k+1)(2\alpha-1)\log^{2\alpha-1}(k)}{(\alpha-1)^2 \log^{2\alpha-2} (k+1)} = \Theta \left( k \log k \right).
\]
Then by choosing $k = \Theta \left( \frac{n}{\sqrt{\log n}} \right)$, we have $k = o(n)$ and $R_k = \omega(n)$ because $\frac{R_k}{n} = \Theta(\log^{1/2} n)$.
\end{example}

\begin{example}[Junk features from \citet{junk-feats}]
\[
\lambda_i = 
\begin{cases}
1 &\mbox{if } i \leq d_S \\
\frac{1}{d_J} &\mbox{if } d_S + 1 \leq i \leq d_S +d_J\\
0 &\mbox{if } i > d_S+d_J.
\end{cases}
\]
In this case, it is routine to check $R_k = d_J$ by choosing $k = d_S$. Letting $d_S = o(n)$ and $d_J = \omega(n)$, Theorem~\ref{theorem:e0-upper-bound} shows that $\cE_0 \to 1$.
\end{example}

Finally, we show our bound \eqref{eqn:e0-upper-bound} also applies to isotropic features in the proportional regime even though overfitting is not necessarily benign.

\begin{example}[Isotropic features in the proportional regime]
\[
\lambda_i
=
\begin{cases}
1 &\mbox{if} \,\, i \leq d\\
0 &\mbox{otherwise}\\
\end{cases}
\qquad
\text{for}
\quad
d = \gamma n
\quad 
\text{and}
\quad
\gamma > 1.
\]
In this case, it is easy to check that $r_k = d-k$ and so $k + r_k = d > n$ and $k_n = 0$. The first condition in \eqref{eqn:cost-of-interpolation-benign} holds because $k_n/n = 0$. However, the second condition in \eqref{eqn:cost-of-interpolation-benign} does not hold because $R_k = d-k$ and $n/R_{k_n} = 1/\gamma > 0$. Plugging in $k=0$ to Theorem~\ref{theorem:e0-upper-bound}, we obtain
\[
\cE_0 \leq \left( 1 - \frac{n}{d}\right)^{-1} = \frac{\gamma}{\gamma - 1}.
\]
The above upper bound is tight when $v_i = 0$ because it is well-known that in the proportional regime (for example, see \citet{hastie2019surprises} and \citet{optimistic-rates}), it holds that
\[
\lim_{n \to \infty} R(\hat{f}_0) = \sigma^2 \frac{\gamma}{\gamma - 1}.
\]
\end{example}

\subsection{Tempered Overfitting}

Theorem~\ref{theorem:e0-upper-bound} allows us to understand the cost of overfitting when it is benign. However, it is not informative when no $k < n$ satisfies $R_k > n$. In Theorem~\ref{theorem:e0-tempered} below, we provide an estimate for the amount of ``tempered'' overfitting based on the ratio $k/r_k$ over a finite range of indices.

\begin{restatable}{theorem}{CoIUpperBoundTempered}\label{theorem:e0-tempered}
    Fix any $\epsilon \in (0, n/r_0)$ and consider $k_l, k_u \in \N$ given by
    \begin{equation*}
        \begin{split}
            k_l := & \max \, \{ k \geq 0: k + \epsilon r_k \leq n \}\\
            k_u := & \min \, \{ k \geq 0: k + r_k \geq (1+\epsilon^{-1}) n \}.
        \end{split}
    \end{equation*}
    Then it holds that
    \begin{equation} \label{eqn:tempered-e0-bound}
    \cE_0 \leq (1+\epsilon)^2 \cdot \max_{k_l \leq k < k_u} \left( \frac{\lambda_{k+1}}{\lambda_{k+2}} + \frac{1}{\epsilon} \frac{k+1}{r_k - 1}\right).
    \end{equation}
\end{restatable}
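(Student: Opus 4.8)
The target is to bound $\cE_0 = \left(1 - \frac{1}{n}\sum_i \cL_{i,0}^2\right)^{-1}$, so equivalently I want a lower bound $\frac{1}{n}\sum_i \cL_{i,0}^2 \geq 1 - \frac{1}{(1+\epsilon)^2 M}$ where $M$ is the max in \eqref{eqn:tempered-e0-bound}. Recall $\cL_{i,0} = \frac{\lambda_i}{\lambda_i+\kappa_0}$ and $\kappa_0$ is the solution of $\sum_i \frac{\lambda_i}{\lambda_i+\kappa_0} = n$. Since $\cL_{i,0}^2 \geq 2\cL_{i,0} - 1 = \frac{\lambda_i - \kappa_0}{\lambda_i+\kappa_0}$ is too lossy, the right move is the standard split at a threshold index $k$: write $\sum_i \cL_{i,0}^2 = \sum_{i\leq k}\cL_{i,0}^2 + \sum_{i>k}\cL_{i,0}^2$. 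For $i \leq k$, bound $\cL_{i,0}^2 \geq \left(\frac{\lambda_{k+1}}{\lambda_{k+1}+\kappa_0}\right)^2$ would go the wrong way; instead use that $\sum_{i\leq k}\cL_{i,0}^2 \geq \frac{1}{k}\left(\sum_{i\leq k}\cL_{i,0}\right)^2$ by Cauchy--Schwarz, combined with $\sum_{i\leq k}\cL_{i,0} = n - \sum_{i>k}\cL_{i,0}$. So the whole proof reduces to controlling $\sum_{i>k}\cL_{i,0}$ and $\sum_{i>k}\cL_{i,0}^2$, which in turn hinges on locating $\kappa_0$ relative to $\lambda_{k+1}$.

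The plan is: first, use the defining equation for $\kappa_0$ to sandwich $\kappa_0$. From $\sum_i \frac{\lambda_i}{\lambda_i+\kappa_0} = n$ and $\sum_{i\leq k} \frac{\lambda_i}{\lambda_i+\kappa_0}\leq k$, we get $\sum_{i>k}\frac{\lambda_i}{\lambda_i+\kappa_0}\geq n-k$; using $\frac{\lambda_i}{\lambda_i+\kappa_0}\leq \frac{\lambda_i}{\kappa_0}$ this gives $\kappa_0 \leq \frac{\sum_{i>k}\lambda_i}{n-k} = \frac{\lambda_{k+1} r_k}{n-k}$. The definition of $k_l$ as the largest $k$ with $k+\epsilon r_k \leq n$ means that for $k \geq k_l$ we have $k + \epsilon r_k > n$... actually I want this the other direction, so for $k \leq k_l$, $n - k \geq \epsilon r_k$, hence $\kappa_0 \leq \lambda_{k+1}/\epsilon$. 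Symmetrically, bounding $\frac{\lambda_i}{\lambda_i+\kappa_0}\geq \frac{\lambda_i}{\lambda_{k+1}+\kappa_0}$ for $i>k$ and $\leq 1$ for $i \leq k$, the defining equation gives $n \leq k + \frac{\sum_{i>k}\lambda_i}{\lambda_{k+1}+\kappa_0}$, i.e. $\lambda_{k+1}+\kappa_0 \leq \frac{\lambda_{k+1} r_k}{n-k}$, so $\kappa_0 \leq \lambda_{k+1}\left(\frac{r_k}{n-k} - 1\right)$; and the definition of $k_u$ (the first $k$ with $k + r_k \geq (1+\epsilon^{-1})n$) ensures that for $k < k_u$, $r_k < (1+\epsilon^{-1})n - k$, which combined with $k < n$ forces $\kappa_0 \gtrsim \lambda_{k+1}/\epsilon$ up to constants — this is the lower bound on $\kappa_0$ I need so that $\cL_{i,0}$ for $i\leq k$ is not too small.

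With $\kappa_0 \asymp \lambda_{k+1}/\epsilon$ pinned down for any $k \in [k_l, k_u)$, I estimate the tail: $\sum_{i>k}\cL_{i,0}^2 \leq \sum_{i>k}\frac{\lambda_i \lambda_{k+1}}{(\lambda_i+\kappa_0)^2}\cdot\frac{1}{\lambda_{k+1}}$; pulling the worst mode $\lambda_{k+1}$ (really $\lambda_{k+2}$ appears because mode $k+1$ should be grouped with the head, explaining the $\lambda_{k+1}/\lambda_{k+2}$ term) and using $\sum_{i>k}\lambda_i = \lambda_{k+1}r_k$ together with $\kappa_0 \gtrsim \lambda_{k+1}/\epsilon$ yields $\sum_{i>k}\cL_{i,0}^2 \lesssim$ something like $\epsilon\,\frac{r_k}{\text{denominator}}$; and $\sum_{i>k}\cL_{i,0}\leq \frac{\lambda_{k+1}r_k}{\kappa_0}\lesssim \epsilon r_k$. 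Then $\sum_{i\leq k}\cL_{i,0}^2 \geq \frac{1}{k}(n-\epsilon r_k)^2$... wait, I need $n - \sum_{i>k}\cL_{i,0} \geq n - (\text{const})k$ type bound; here the index $k+1$ in $\frac{k+1}{r_k-1}$ and the $(1+\epsilon)^2$ factor come out of carefully tracking these constants and the Cauchy--Schwarz step. Adding the two pieces and inverting gives \eqref{eqn:tempered-e0-bound}, optimized over the legal range $[k_l, k_u)$ since every $k$ in that range gives a valid bound and we may take the best (smallest) one, i.e. the $\max$ inside gets minimized — no wait, each $k$ gives $\cE_0 \leq (1+\epsilon)^2(\cdots)$, a valid bound, so $\cE_0$ is at most the minimum over $k$, but the theorem states a $\max$ over $k$; this means the correct reading is that for the specific structure of the argument, each individual $k$ does \emph{not} suffice and one genuinely needs all of them — more likely the $\max$ arises because the lower bound on $\sum_i\cL_{i,0}^2$ is itself a sum/combination over the window $[k_l,k_u)$ of per-mode contributions, each governed by the ratio $\frac{\lambda_{k+1}}{\lambda_{k+2}} + \frac1\epsilon\frac{k+1}{r_k-1}$, and the telescoping/union over the window produces the $\max$. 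I expect \textbf{the main obstacle} to be exactly this bookkeeping: getting the two-sided control of $\kappa_0$ to hold \emph{uniformly} over the window $[k_l,k_u)$ with clean constants, and correctly arranging the Cauchy--Schwarz split so the $\lambda_{k+1}/\lambda_{k+2}$ spectral-gap term emerges rather than a cruder $\lambda_{k+1}^2/\lambda_i^2$ tail sum that would be uncontrolled without the effective-rank $r_k$ hypothesis.
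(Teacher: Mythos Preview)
Your proposal has the direction of the key inequality reversed. To show $\cE_0 \leq (1+\epsilon)^2 M$ you need $1 - \frac{1}{n}\sum_i\cL_{i,0}^2 \geq \frac{1}{(1+\epsilon)^2 M}$, i.e.\ an \emph{upper} bound on $\frac{1}{n}\sum_i\cL_{i,0}^2$, not the lower bound you wrote. Consequently the Cauchy--Schwarz step $\sum_{i\leq k}\cL_{i,0}^2 \geq \frac{1}{k}\bigl(\sum_{i\leq k}\cL_{i,0}\bigr)^2$ points the wrong way and cannot close the argument. (That Cauchy--Schwarz lower bound on $\sum\cL^2$ is precisely the device the paper uses for the \emph{lower} bounds on $\cE_0$ in Theorems~\ref{theorem:e0-lower-bound} and~\ref{theorem:e0-catastrophic}, where one wants $\cE_0$ to be large.)

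The paper's route avoids $\sum_i\cL_{i,0}^2$ altogether: it uses the identity
\[
\cE_0^{-1} \;=\; 1 - \frac{1}{n}\sum_i\cL_{i,0}^2 \;=\; \frac{\kappa_0}{n}\sum_i\frac{\lambda_i}{(\lambda_i+\kappa_0)^2}
\]
(which follows from $\sum_i\cL_{i,0}=n$) and lower-bounds this by keeping only the tail $i \geq k^*$, where $k^*$ is defined as the first index with $\lambda_{k^*}\leq\epsilon\kappa_0$. On that tail $\lambda_i+\kappa_0\leq(1+\epsilon)\kappa_0$, so $\cE_0^{-1}\geq\frac{1}{(1+\epsilon)^2}\frac{1}{n\kappa_0}\sum_{i\geq k^*}\lambda_i$; then $\kappa_0 < \lambda_{k^*-1}/\epsilon$ (by minimality of $k^*$) converts this into a bound involving $r_{k^*-2}$, and the inequality $k^*-1+\epsilon r_{k^*-1}>n$ (from the definition of $k_l$) replaces the $n$ in the denominator. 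The two-sided estimates on $\kappa_0$ you sketched are correct and are indeed used, but only to pin down that $k^*-2\in[k_l,k_u)$ --- not to run a uniform argument over every $k$ in the window. This also resolves your confusion about the $\max$: the proof yields a bound at the single (unknown) index $k^*-2$, and since that index can lie anywhere in $[k_l,k_u)$, the stated inequality takes the maximum over the range.
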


To interpret \eqref{eqn:tempered-e0-bound}, we first suppose that the spectrum $\{\lambda_i\}$ does not change with $n$ and has infinitely many non-zero eigenvalues (which is the case in Example~\ref{example:ilogi}, \ref{example:power-law} and \ref{example:exponential} below). For any fixed $\epsilon > 0$, $k_l$ must increases as $n$ increases. If $k$ is large, then it is usually the case that $\lambda_{k+1} \approx \lambda_k$ or the ratio is bounded. Letting $\epsilon = 1$, we can understand \eqref{eqn:tempered-e0-bound} as $\cE_0 \lesssim 1 + \frac{k}{r_k}$. 

In particular, if $r_k = \Omega(k)$, then $\cE_0$ is bounded and overfitting cannot be catastrophic. Conversely, we show that overfitting is catastrophic when $r_k = o(k)$ in section~\ref{sec:catastrophic} below. Therefore, the condition $\lim_{k \to \infty} k /r_k = \infty$ is both necessary and sufficient for catastrophic overfitting: $\cE_0 \to \infty$. Furthermore, we argue that \eqref{eqn:tempered-e0-bound} is also sufficient for benign overfitting in some settings: if $\lim_{k \to \infty} k/r_k = 0$, then we have $\lim_{n \to \infty} \cE_0 \leq (1+\epsilon)^2$ for any $\epsilon >0$, and thus $\cE_0 \to 1$. 

\begin{example}[Power law decay from \citet{taxonomy-overfitting}] \label{example:power-law}
\[
\lambda_i = i^{-\alpha} \quad \text{for some} \,\, \alpha>1.
\]
In this case, we can estimate
\begin{equation*}
    \begin{split}
        \frac{1}{(\alpha-1) (k+1)^{\alpha-1}} = \int_{k+1}^{\infty} x^{-\alpha} \, dx
        &\leq 
        \sum_{i>k} \lambda_i 
        \leq 
        \int_{k}^{\infty} x^{-\alpha} \, dx
        = \frac{1}{(\alpha-1) k^{\alpha-1}} \\
        \frac{1}{(2\alpha-1) (k+1)^{2\alpha-1}} = \int_{k+1}^{\infty} x^{-2\alpha} \, dx
        &\leq 
        \sum_{i>k} \lambda_i^2 
        \leq 
        \int_{k}^{\infty} x^{-2\alpha} \, dx
        = \frac{1}{(2\alpha-1) k^{2\alpha-1}} 
    \end{split}
\end{equation*}
and so
\[
\left( \frac{k}{k+1} \right) (\alpha-1 )
\leq \frac{k}{r_k} \leq 
\left( \frac{k}{k+1} \right)^{\alpha - 1} (\alpha-1).
\]
Therefore, we have $\lim_{k \to \infty} k/r_k = \alpha - 1$ and so $\cE_0 \lesssim \alpha$, which agrees with \cite{taxonomy-overfitting}.
We remark that the Laplace kernel and ReLU NTK restricted to the hypersphere have power law decay \citep{geifman2020similarity}.
\end{example}

\subsection{Catastrophic Overfitting} \label{sec:catastrophic}

We first state a generic non-asymptotic lower bound on $\cE_0 = \left( 1- \frac{1}{n} \sum_i \cL_{i, 0}^2 \right)^{-1}$ and then discuss the implication for catastrophic overfitting as $n$ increases.

\begin{restatable}{theorem}{CoICatastrophic}\label{theorem:e0-catastrophic}
For any $k \geq n $, it holds that
\begin{equation} \label{eqn:catastrophic-e0-bound}
    \frac{1}{n} \sum_{i} \cL_{i,0}^2 \geq  \frac{n}{k} \left( \frac{k-n}{k-n + r_k} \right)^2.
\end{equation}
\end{restatable}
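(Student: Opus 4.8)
The plan is to mimic the strategy behind the earlier lower bounds (\cref{theorem:e0-lower-bound}): discard the tail eigendirections, apply Cauchy--Schwarz to the top $k$ learnabilities, and control the resulting restricted sum using only the defining relation \eqref{eqn:kappa}. Write $\cL_{i,0} = \lambda_i/(\lambda_i+\kappa_0)$ and recall $\sum_i \cL_{i,0} = n$, which is \eqref{eqn:kappa} at $\delta = 0$. Since discarding nonnegative terms only decreases a sum of squares,
\[
  \sum_i \cL_{i,0}^2 \;\ge\; \sum_{i=1}^k \cL_{i,0}^2 \;\ge\; \frac{1}{k}\left(\sum_{i=1}^k \cL_{i,0}\right)^{2},
\]
so the whole theorem reduces to the scalar inequality $\sum_{i=1}^k \cL_{i,0} \ge \dfrac{n(k-n)}{\,k-n+r_k\,}$; substituting this into the display and dividing by $n$ then gives \eqref{eqn:catastrophic-e0-bound}.

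To lower bound $a := \sum_{i=1}^k \cL_{i,0}$, I would package everything through the single scalar $c := \cL_{k+1,0} = \lambda_{k+1}/(\lambda_{k+1}+\kappa_0)$, assuming $\lambda_{k+1}>0$ so that $c \in (0,1)$ and $r_k>0$ (the degenerate case $\lambda_{k+1}=0$ has at most $k$ nonzero modes, and one simply uses $\sum_i\cL_{i,0}^2 \ge n^2/k$ directly). Two elementary bounds then hold. First, since $\lambda \mapsto \lambda/(\lambda+\kappa_0)$ is nondecreasing and $\lambda_i \ge \lambda_{k+1}$ for $i \le k$, each of the first $k$ learnabilities is at least $c$, so $a \ge kc$. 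Second, bounding the tail by $\lambda_i/(\lambda_i+\kappa_0) \le \lambda_i/\kappa_0$ and using $\sum_{i>k}\lambda_i = \lambda_{k+1} r_k$ gives $\sum_{i>k}\cL_{i,0} \le \lambda_{k+1} r_k/\kappa_0 = \frac{c}{1-c}\,r_k$, hence $a = n - \sum_{i>k}\cL_{i,0} \ge n - \frac{c r_k}{1-c}$.

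The two estimates are combined by a case split on $c$ against the threshold $\theta := \dfrac{n(k-n)}{k(k-n+r_k)}$. If $c \ge \theta$, the first estimate already gives $a \ge kc \ge k\theta = \dfrac{n(k-n)}{k-n+r_k}$. If instead $c < \theta$, I would first note $\theta \le \dfrac{n}{k+r_k}$ (which rearranges to $(k-n)(k+r_k) \le k(k-n+r_k)$, i.e.\ $(k-n)r_k \le k r_k$, true since $n \ge 1$), so $c < \dfrac{n}{k+r_k} < 1$; monotonicity of $x\mapsto x/(1-x)$ then yields $\dfrac{c}{1-c} < \dfrac{n}{k-n+r_k}$, and the second estimate gives $a \ge n - \dfrac{n r_k}{k-n+r_k} = \dfrac{n(k-n)}{k-n+r_k}$. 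Either way the required scalar inequality holds.

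I do not expect a genuine obstacle: the argument is entirely ``soft,'' using nothing beyond \eqref{eqn:kappa}, monotonicity of $\lambda \mapsto \lambda/(\lambda + \kappa_0)$, and Cauchy--Schwarz. The only things that need care are (i) routing both lower bounds on $a$ through the single variable $c = \cL_{k+1,0}$ so that the case analysis closes without having to solve any quadratic, and (ii) separately disposing of the degenerate situation $\lambda_{k+1} = 0$, where $r_k$ is not literally defined and one argues directly.
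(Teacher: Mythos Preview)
Your proposal is correct and follows essentially the same strategy as the paper: restrict to the top $k$ modes, apply Cauchy--Schwarz to get $\sum_{i\le k}\cL_{i,0}^2 \ge \frac{1}{k}\bigl(\sum_{i\le k}\cL_{i,0}\bigr)^2$, and lower bound $a=\sum_{i\le k}\cL_{i,0}$ by upper bounding the tail $\sum_{i>k}\cL_{i,0}\le \sum_{i>k}\lambda_i/\kappa_0$. The only difference is cosmetic: the paper invokes \cref{lem:kappa-bound} directly to get $\kappa_0 \ge \lambda_{k+1}(k+r_k-n)/n$, which immediately gives $\sum_{i>k}\lambda_i/\kappa_0 \le \tfrac{nr_k}{k-n+r_k}$ and hence $a \ge \tfrac{n(k-n)}{k-n+r_k}$ without any case split --- indeed, \cref{lem:kappa-bound} is exactly the statement $c \le n/(k+r_k)$ in your notation, so your ``bound 2'' route always applies and the case $c\ge\theta$ is never actually needed.
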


For any $\epsilon > 0$, if $r_k = o(k)$ and we consider $k = (1+\epsilon) n$, then it is straightforward from \eqref{eqn:catastrophic-e0-bound} that $\lim_{n \to \infty} \frac{1}{n} \sum_{i} \cL_{i,0}^2 \geq (1+\epsilon)^{-1}$. Since the choice of $\epsilon$ is arbitrary, we have $\lim_{n \to \infty} \frac{1}{n} \sum_{i} \cL_{i,0}^2 = 1$ and so $\cE_0 \to \infty$.

\begin{example}[Exponential decay] \label{example:exponential}
\[
\lambda_i = e^{-i}.
\]
In this case, we can estimate
\[
\sum_{i > k} \lambda_i \leq \int_{k}^{\infty} e^{-x} \, dx = e^{-k}
\]
and $r_k \leq e$ and $r_k/k \to 0$. Theorem~\ref{theorem:e0-catastrophic} implies that overfitting is catastrophic, as expected from \cite{taxonomy-overfitting}.
\end{example}

Since Theorem~\ref{theorem:e0-lower-bound}, \ref{theorem:e0-tempered} and \ref{theorem:e0-catastrophic} are agnostic and non-asymptotic, we can use them to obtain an elegant characterization of whether overfitting is benign, tempered, or catastrophic, resolving an open problem\footnote{See footnote 11 in their paper. The settings they consider (e.g., clause (a) of Theorem 3.1 with $\delta > 0$)  always satisfy $\tilde{R}(\hat{f}_{\delta^*}) = \sigma^2$  and so $\lim_{n \to \infty} \tilde{R}(\hat{f}_0) = \lim_{n\to\infty} \cE_0 \cdot \sigma^2$.} raised by \citet{taxonomy-overfitting}:

\begin{restatable}{theorem}{taxonomy}\label{thm:trichotomy}
    Suppose that the spectrum $\{\lambda_i\}$ is fixed as $n$ increases and contains infinitely many non-zero eigenvalues.
    \begin{enumerate}[label = (\alph*)]
        \item If \, $\lim_{k \to \infty} k/r_k = 0$, then overfitting is benign: $\lim_{n \to \infty} \cE_0 = 1$.
        \item If \, $\lim_{k \to \infty} k/r_k \in (0, \infty)$, then overfitting is tempered: $\lim_{n \to \infty} \cE_0 \in (1, \infty)$.
        \item If \, $\lim_{k \to \infty} k/r_k = \infty$, then overfitting is catastrophic: $\lim_{n \to \infty} \cE_0 = \infty$.
    \end{enumerate}
\end{restatable}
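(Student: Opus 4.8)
The plan is to derive all three clauses from the non-asymptotic bounds already in hand, letting the sample size grow. Two facts will be used throughout. First, $k\mapsto k+r_k$ is weakly increasing — writing $r_k=1+\tfrac{\sum_{i>k+1}\lambda_i}{\lambda_{k+1}}$ and $r_{k+1}=\tfrac{\sum_{i>k+1}\lambda_i}{\lambda_{k+2}}$ and using $\lambda_{k+2}\le\lambda_{k+1}$ gives $r_{k+1}\ge r_k-1$ — and since $r_k\ge1$ always and there are infinitely many nonzero eigenvalues, $k+r_k\to\infty$, so any index defined as the first/largest/smallest $k$ at which $k+c\,r_k$ crosses a fixed multiple of $n$ diverges as $n\to\infty$. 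Second, $\sum_{i>k+1}\lambda_i=(r_k-1)\lambda_{k+1}$ gives the identity $\tfrac{\lambda_{k+1}}{\lambda_{k+2}}=\tfrac{r_{k+1}}{r_k-1}$, which trades eigenvalue ratios for effective-rank ratios. I will also use $\cE_0=(1-\tfrac1n\sum_i\cL_{i,0}^2)^{-1}$ and $\sum_i\cL_{i,0}=n$ freely.

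Clause (c) is immediate: $k/r_k\to\infty$ means $r_k=o(k)$, so plugging $k=\lceil(1+\epsilon)n\rceil\ge n$ into Theorem~\ref{theorem:e0-catastrophic} (and using $r_k=o(n)$) gives $\tfrac1n\sum_i\cL_{i,0}^2\ge\tfrac nk\big(\tfrac{k-n}{k-n+r_k}\big)^2\to\tfrac1{1+\epsilon}$; sending $\epsilon\downarrow0$ forces $\tfrac1n\sum_i\cL_{i,0}^2\to1$, i.e. $\cE_0\to\infty$. For clause (b), $r_k\sim k/L$ with $L\in(0,\infty)$; then $r_{k-1}\sim r_k\to\infty$, so the identity above gives $\tfrac{\lambda_k}{\lambda_{k+1}}\to1$, and since every index in the window of Theorem~\ref{theorem:e0-tempered} diverges we get $\cE_0\le(1+\epsilon)^2\big(1+\tfrac L\epsilon+o(1)\big)$, so $\limsup_n\cE_0<\infty$. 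For the lower bound apply Theorem~\ref{theorem:e0-lower-bound} with $b=1$: its index $k$ obeys $k+r_k\ge n$ with $r_k\sim k/L$, so $\liminf_n k/n\ge L/(L+1)$, and the second term of its maximum yields $\liminf_n\tfrac1n\sum_i\cL_{i,0}^2\ge\tfrac14\cdot\tfrac L{L+1}>0$, hence $\liminf_n\cE_0>1$. Thus in clause (b), $\cE_0$ is eventually confined to a compact subinterval of $(1,\infty)$.

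Clause (a) ($r_k=\omega(k)$) is the substantive one; I again use Theorem~\ref{theorem:e0-tempered} with an arbitrary fixed $\epsilon>0$. From $k/r_k\to0$ one checks that the window endpoints satisfy $k_l,k_u=o(n)$ with $r_{k_l}\sim n/\epsilon$ and $r_{k_u}\sim(1+\epsilon^{-1})n$, so $r_{k_u}/r_{k_l}\to1+\epsilon$ and $k_u-k_l=o(r_{k_l})$. Hence $\tfrac1\epsilon\tfrac{k+1}{r_k-1}=o(1)$ uniformly on the window, and — telescoping $\prod_{k_l\le k<k_u}\tfrac{\lambda_{k+1}}{\lambda_{k+2}}=\tfrac{\lambda_{k_l+1}}{\lambda_{k_u+1}}$ against $\lambda_{k_l+1}r_{k_l}-\lambda_{k_u+1}r_{k_u}=\sum_{k_l<i\le k_u}\lambda_i\le(k_u-k_l)\lambda_{k_l+1}$, whose factors are all $\ge1$ — every ratio satisfies $\tfrac{\lambda_{k+1}}{\lambda_{k+2}}\le(1+\epsilon)(1+o(1))$. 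Theorem~\ref{theorem:e0-tempered} then gives $\cE_0\le(1+\epsilon)^2\big((1+\epsilon)+o(1)\big)\to(1+\epsilon)^3$, and as $\epsilon>0$ is arbitrary and $\cE_0\ge1$, $\cE_0\to1$. (A self-contained variant avoids Theorem~\ref{theorem:e0-tempered} entirely: first show $|\{i:\lambda_i\ge t\kappa_0\}|=o(n)$ for every fixed $t>0$ by a short contradiction from $r_k>k/\eta$, then split $\sum_i\cL_{i,0}^2$ at the level $\cL_{i,0}=c$ to get $\tfrac1n\sum_i\cL_{i,0}^2\le o(1)+c$ and send $c\downarrow0$.)

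The step I expect to be the main obstacle is controlling the eigenvalue-ratio factor $\lambda_{k+1}/\lambda_{k+2}$ inside Theorem~\ref{theorem:e0-tempered} in clause (a): when $k/r_k\to0$ this ratio need not be bounded pointwise (a super-linearly growing $r_k$ can carry occasional large jumps), so one cannot just bound the maximum by a constant and must exploit that the window $[k_l,k_u)$, though of diverging length, spans only a multiplicative $O(1+\epsilon)$ range of eigenvalues — forcing each jump inside it to be $1+O(\epsilon)$. In clause (b) this difficulty evaporates because the mere existence of a finite positive limit $L=\lim_k k/r_k$ already forces $\lambda_k/\lambda_{k+1}\to1$; the rest of (b) and all of (c) are direct substitutions into the quoted bounds.
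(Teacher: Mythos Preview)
Your arguments for clauses (b) and (c) are sound. For (c) you match the paper exactly; for (b) your route via Theorem~\ref{theorem:e0-tempered}, exploiting the identity $\lambda_{k+1}/\lambda_{k+2}=r_{k+1}/(r_k-1)\to 1$ under $r_k\sim k/L$, is a valid alternative to the paper's use of a second upper bound $\cE_0\le(1-k/n)^{-1}(1-n/(k+r_k))^{-1}$ proved in the appendix.

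For clause (a), however, your main argument via Theorem~\ref{theorem:e0-tempered} has a real gap. You assert $r_{k_l}\sim n/\epsilon$ and $r_{k_u}\sim(1+\epsilon^{-1})n$, but only the one-sided inequalities $r_{k_l}\le(n-k_l)/\epsilon$ and $r_{k_u}\ge(1+\epsilon^{-1})n-k_u$ follow from the window definitions; the reverse bounds fail when a large eigenvalue jump lands at a window endpoint. A concrete obstruction: take a block-constant spectrum with block-$j$ value $c_j=2^{-j^2}$ and block size $A_j\asymp 2^{j^2}/(j\log^2 j)$, so that $A_jc_j\asymp 1/(j\log^2 j)$ is summable. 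One checks $k/r_k\to 0$ everywhere (the worst case is at block ends, where $k/r_k\asymp 1/(j\log j)$), yet the jump ratios $c_j/c_{j+1}=2^{2j+1}$ diverge. For $n\approx 2^{j^2}$ the index $k_l$ lands exactly at the last position of block $j$, giving $r_{k_l}\asymp 2^{j^2}/\log j=o(n/\epsilon)$ while $\lambda_{k_l+1}/\lambda_{k_l+2}=c_j/c_{j+1}=2^{2j+1}$ sits inside the window, so the maximum in \eqref{eqn:tempered-e0-bound} diverges along this subsequence and your telescoping claim $\lambda_{k_l+1}/\lambda_{k_u+1}\le(1+\epsilon)(1+o(1))$ is simply false. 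Your parenthetical ``self-contained variant'' is, by contrast, correct and is actually the cleanest direct route. The paper's own approach for (a) is shorter still and avoids eigenvalue ratios entirely: it applies Theorem~\ref{theorem:e0-upper-bound} with $k=\lfloor\epsilon n\rfloor$, uses $R_k\ge r_k$ to get $n/R_k\le(k/r_k)/\epsilon\to 0$, and reads off $\limsup_n\cE_0\le(1-\epsilon)^{-2}$.
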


\section{Application: Inner-Product Kernels in the Polynomial Regime} \label{sec:inner-product}

In this section, we consider KRR with inner-product kernels in the polynomial regime \citep{ghorbani2021linearized, mei2022generalization, misiakiewicz2022spectrum}. Let's take the distribution of $x$ to be uniformly distributed over the hypersphere in $\R^d$ or the boolean hypercube. Denote $\cV_{\leq l-1}$ to be the subspace of all polynomials of degree $\leq l - 1$ and $B(d,l) = \Theta_d(d^l)$ to be the dimension of the subspace $\cV_l$ of degree-$l$ polynomials orthogonal to $\cV_{\leq l-1}$. Moreover, denote $P_{\leq \lfloor l \rfloor}$ to be the projection onto $\cV_{\leq \lfloor l \rfloor}$ and $P_{> \lfloor l \rfloor}$ to be the projection onto its complement. Let $\{ Y_{ks}\}_{k \geq 0, s \in [B(d,k)]}$ be the polynomial basis with respect to $\cD$ (e.g. spherical harmonics or parity functions). 

\paragraph{Inner-product kernels.} Consider kernels of the form $K(x, x') = h_d(\langle x, x' \rangle/d)$, then it admits the eigendecompositon in the polynomial basis:
\begin{equation*}
    K(x, x') = \sum_{k=0}^{\infty} \sum_{s \in [B(d,k)]} \, \frac{\mu_{d,k}(h)}{B(d,k)}  Y_{ks}(x) Y_{ks}(x') .
\end{equation*}
We also expand the target in the kernel eigenbasis and define $f^*(x) := \sum_{k=0}^{\infty} \sum_{s \in [B(d,k)]} v_{ks} Y_{ks}(x)$. Interestingly, the eigenvalues of $K$ with respect to $\cD$ have a block diagonal structure. The block diagonal structure is a consequence of the rotation-invariance of the distribution of $x$.

\paragraph{Polynomial regime.} Consider the regime $n \asymp d^l$ where $l$ is not an integer. We will choose $k$ in Theorem~\ref{theorem:e0-upper-bound} to include the first $\lfloor l \rfloor$ blocks. Then
\begin{equation*}
k = \sum_{k=0}^{\lfloor l \rfloor} B(d, k) = \Theta \left( \sum_{k=0}^{\lfloor l \rfloor} d^k \right) = \Theta \left( d^{\lfloor l \rfloor} \right) = o(n).
\end{equation*}
and 
\begin{equation*}
    \begin{split}
        R_k 
        &= \frac{\left( \sum_{k > \lfloor l \rfloor} \sum_{s \in [B(d,k)]} \frac{\mu_{d,k}(h)}{B(d,k)} \right)^2}{\sum_{k > \lfloor l \rfloor} \sum_{s \in [B(d,k)]} \left(\frac{\mu_{d,k}(h)}{B(d,k)} \right)^2} 
        \geq \frac{\left( \sum_{k > \lfloor l \rfloor} \mu_{d,k}(h) \right)^2}{\sum_{k > \lfloor l \rfloor} \mu_{d,k}(h)^2 } \cdot B(d, \lceil l \rceil) \\
        &\geq B(d, \lceil l \rceil)  = \Omega(d^{\lceil l \rceil}) = \omega(n).
    \end{split}
\end{equation*}

Hence, the cost of overfitting is small when $l$ is bounded away from the integers. To obtain a bound on the error of the ridgeless solution, it suffices to analyze the error of the optimally regularized model, which can be easily done with uniform convergence. Using the predictions from \cite{eigenlearning}, we can also recover a type of uniform convergence known as ``optimistic rate'' \citep{panchenkooptimistic, srebro2010optimistic, optimistic-rates}, which is suitable for the square loss.


\begin{restatable}{theorem}{KrrOptimisticRate} \label{theorem:krr-optimistic-rate}
Fix any $k \in \N$ and let $\epsilon = \sqrt{(k^2 + 2kn)/n^2}$. For any $\delta \geq 0$, it holds that
\begin{equation*} \label{eqn:krr-optimistic-rate}
    (1-\epsilon) \sqrt{\tilde{R}(\hat{f}_{\delta})} - \sqrt{\hat{R}(\hat{f}_{\delta})} 
    \leq \sqrt{\frac{(\sum_{i > k} \lambda_i) \| \hat{f}_{\delta} \|_{\cH}^2}{n}} .
\end{equation*}
\end{restatable}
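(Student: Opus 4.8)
The statement is an ``optimistic rate'' in the sense of \citet{optimistic-rates}, and the plan is to prove it by purely algebraic manipulation of the omniscient estimates, treating $\hat R(\hat f_\delta)$ and $\|\hat f_\delta\|_\cH^2$ --- like $\tilde R(\hat f_\delta)$ --- as their closed-form predictions. Beyond \eqref{eqn:KRR-risk}, the same analysis supplies two more estimates I will use. Writing $\kappa = \kappa_\delta$ and $q := \delta/(n\kappa)$ --- which, by rearranging \eqref{eqn:kappa}, equals $1 - \tfrac1n\sum_i \cL_{i,\delta}$ --- the training error is predicted by $\widetilde{\hat R}(\hat f_\delta) = q^2\,\tilde R(\hat f_\delta)$ (the generalized cross-validation identity, since the smoother trace is $\tfrac1n\sum_i\cL_{i,\delta}=1-q$), and the squared RKHS norm by $\widetilde{\|\hat f_\delta\|_\cH^2} = \sum_i \tfrac{\lambda_i}{(\lambda_i+\kappa)^2}\big(v_i^2 + \tilde R(\hat f_\delta)/n\big)$, which follows from the per-eigenmode second moment $\E[\hat v_i^2] = \cL_{i,\delta}^2\,(v_i^2 + \tilde R(\hat f_\delta)/n)$ divided by $\lambda_i$ and summed. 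Since $q\ge0$, the left side of \eqref{eqn:krr-optimistic-rate} equals $(1-\epsilon-q)\sqrt{\tilde R(\hat f_\delta)}$; the bound is trivial when $1-\epsilon-q\le0$, so it remains to show $(1-\epsilon-q)^2\,\tilde R(\hat f_\delta) \le \tfrac1n\big(\sum_{i>k}\lambda_i\big)\widetilde{\|\hat f_\delta\|_\cH^2}$ in the regime $1-\epsilon-q>0$.

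I would then discard the nonnegative ``signal'' part $\sum_i \tfrac{\lambda_i v_i^2}{(\lambda_i+\kappa)^2}$ of the norm estimate and keep only its ``variance'' part $\tfrac{\tilde R(\hat f_\delta)}{n}\sum_i\tfrac{\lambda_i}{(\lambda_i+\kappa)^2}$, so the target function drops out of the argument entirely. Two one-line identities linearize the rest: $\lambda_i = \kappa\,\tfrac{\cL_{i,\delta}}{1-\cL_{i,\delta}}$ and $\tfrac{\lambda_i}{(\lambda_i+\kappa)^2} = \tfrac1\kappa\,\cL_{i,\delta}(1-\cL_{i,\delta})$ (valid since $\kappa_\delta>0$, so $\cL_{i,\delta}<1$). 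Substituting, $\tfrac1n\big(\sum_{i>k}\lambda_i\big)\widetilde{\|\hat f_\delta\|_\cH^2}$ is at least $\tfrac{\tilde R(\hat f_\delta)}{n^2}\big(\sum_{i>k}\tfrac{\cL_{i,\delta}}{1-\cL_{i,\delta}}\big)\big(\sum_i \cL_{i,\delta}(1-\cL_{i,\delta})\big)$, and Cauchy--Schwarz over the tail indices $\{i>k\}$ applied to the paired factors $\sqrt{\cL_{i,\delta}/(1-\cL_{i,\delta})}$ and $\sqrt{\cL_{i,\delta}(1-\cL_{i,\delta})}$ (after dropping the nonnegative head terms $i\le k$ from the second sum) gives $\big(\sum_{i>k}\tfrac{\cL_{i,\delta}}{1-\cL_{i,\delta}}\big)\big(\sum_i \cL_{i,\delta}(1-\cL_{i,\delta})\big) \ge \big(\sum_{i>k}\cL_{i,\delta}\big)^2$.

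The final step is to lower-bound $\sum_{i>k}\cL_{i,\delta}$: since each learnability lies in $[0,1]$ and $\sum_i\cL_{i,\delta}=n-\delta/\kappa=n(1-q)$, we get $\sum_{i>k}\cL_{i,\delta}\ge n(1-q)-k$. The exact form of $\epsilon$ is what closes everything: $\epsilon n = \sqrt{k^2+2kn}\ge k$, so in the nontrivial regime $n(1-q)-k \ge n(1-q)-\epsilon n = n(1-\epsilon-q)>0$. Squaring yields $\big(\sum_{i>k}\cL_{i,\delta}\big)^2\ge n^2(1-\epsilon-q)^2$, and chaining the inequalities gives $(1-\epsilon-q)^2\tilde R(\hat f_\delta)\le \tfrac1n\big(\sum_{i>k}\lambda_i\big)\widetilde{\|\hat f_\delta\|_\cH^2}$, as desired.

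I expect the main difficulty to be not any individual inequality but recognizing which auxiliary estimates to invoke and how much to throw away: that the training-error and norm predictions both collapse --- via $q$ and the two identities --- into expressions in the learnabilities alone, and that the signal part of the norm is pure slack. A secondary wrinkle is the bookkeeping around $\epsilon$: one must observe that $\epsilon=\sqrt{(k^2+2kn)/n^2}$ is essentially the smallest quantity with $\epsilon n\ge k$ (so the head modes, each of learnability at most one, are absorbed) while still satisfying $\epsilon=O(\sqrt{k/n})$, which is what makes the resulting rate genuinely ``optimistic,'' and one must peel off the sign case $1-\epsilon-q\le0$ before squaring and check that $n(1-q)-k$ is automatically nonnegative in the complementary case.
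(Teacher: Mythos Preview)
Your proposal is correct and follows essentially the same approach as the paper: both invoke the same training-error prediction $\hat R = (\delta/(n\kappa_\delta))^2\tilde R$ and the same RKHS-norm prediction, keep only the variance part of the norm, and apply the identical Cauchy--Schwarz inequality $(\sum_{i>k}\cL_{i,\delta})^2 \le (\sum_{i>k}\lambda_i)(\sum_{i>k}\lambda_i/(\lambda_i+\kappa_\delta)^2)$ on the tail. The only cosmetic difference is that the paper expands $(k+\sum_{i>k}\cL_{i,\delta})^2$ and then uses $\sqrt{a+b}\le\sqrt a+\sqrt b$ to peel off $\epsilon\sqrt{\tilde R}$, whereas you instead bound $\sum_{i>k}\cL_{i,\delta}\ge n(1-q)-k \ge n(1-\epsilon-q)$ directly via $\epsilon n\ge k$ and handle the sign case explicitly; the two rearrangements are equivalent.
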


Note that the error of the predictor $P_{\leq \lfloor l \rfloor} f^*$ is approximately  
\begin{equation} \label{eqn:inner-product-error}
    \sigma^2 + \sum_{k > \lfloor l \rfloor} \sum_{s \in [B(d,k)]} v_i^2 = \sigma^2 + \| P_{> \lfloor l \rfloor} f^* \|^2.
\end{equation}
and we can tune $\delta^*$ to match the training error of $\hat{f}_{\delta^*}$ to \eqref{eqn:inner-product-error} and the Hilbert norm satisfies $\| \hat{f}_{\delta} \|_{\cH} \leq \|P_{\leq \lfloor l \rfloor} f^* \|_{\cH}$ because $\hat{f}_{\delta}$ is Pareto-optimal. Moreover, the expected norm of the feature is
\begin{equation*}
    \sum_{k > \lfloor l \rfloor} \sum_{s \in [B(d,k)]} \frac{\mu_{d,k}(h)}{B(d,k)}
    = \sum_{k > \lfloor l \rfloor} \mu_{d,k}(h),
\end{equation*}
and so if $\| P_{\leq \lfloor l \rfloor} f^* \|_{\cH}^2 \cdot \left( \sum_{k > \lfloor l \rfloor} \mu_{d,k}(h)\right) = o(n)$, then $\lim_{n \to \infty} \tilde{R}(\hat{f}_{\delta^*}) \leq \sigma^2 + \| P_{> \lfloor l \rfloor} f^* \|^2$. In \citet{ghorbani2021linearized} and \citet{mei2022generalization}, it is shown that the above is not just an upper bound. In fact, it holds that $\lim_{n\to\infty} R(\hat{f}_0) = \sigma^2 + \| P_{> \lfloor l \rfloor} f^* \|^2$ and our application is tight in this case.

\section{Conclusion}

Understanding the effect of overfitting is a fundamental problem in statistical learning theory. Contrary to the traditional intuition, prior works have shown that predictors that interpolate noisy training labels can achieve nearly optimal test error when the data distribution is well-specified. In this paper, we extend these results to the agnostic case and we use them to develop a more refined understanding of benign, tempered, and catastrophic overfitting. To the best of our knowledge, our work is the first to connect the complex closed-form risk predictions and the effective rank introduced by \cite{bartlett2020benign} to establish a simple and interpretable learning guarantee for KRR. As we can see in Corollary~\ref{corollary:krr-benign} and Theorem~\ref{thm:trichotomy}, the effective ranks play a crucial role in the analysis and tightly characterize the cost of overfitting in many settings. 

An interesting future direction may be asking whether our results extend to other settings, such as kernel SVM, since our theory is agnostic to the target. We hope that the theory of KRR and ridge regression with Gaussian features can lead us toward a better understanding of generalization in neural networks.

\subsubsection*{Acknowledgments}

This research was done as part of the NSF-Simons Sponsored Collaboration on the Theoretical Foundations of Deep Learning.

\bibliography{refs.bib}
\bibliographystyle{iclr2024_conference}

\newpage

\appendix

\section{Supplemental Proofs}

In the appendix, we give proofs of all results from the main text. Our proofs are very self-contained and only use elementary results such as the Cauchy-Schwarz inequality.

\subsection{Upper Bounds}

The main challenge for analyzing $\cE_0$ from equation \eqref{eqn:learnability} is that the effective regularization $\kappa_0$ is defined by the non-linear equation \eqref{eqn:kappa}, which does not have a simple closed-form solution. However, the following lemma can provide an estimate for $\kappa_0$ in terms of the effective rank. 

\begin{lemma} \label{lem:kappa-bound}
For any $k \in \N$, it holds that
\begin{equation} \label{eqn:kappa-lower-bound}
     \kappa_0 \geq \left( 1 - \frac{n}{R_k}\right) \frac{\sum_{i > k} \lambda_i}{n} 
     \quad \text{and} \quad
     \kappa_0 \geq \lambda_{k+1} \left( \frac{k + r_k}{n} - 1 \right).
\end{equation} \label{eqn:kappa-upper-bound}
Moreover, for any $k < n$, it holds that
\begin{equation*}
     \kappa_0 < \left( 1 - \frac{k}{n} \right)^{-1} \frac{\sum_{i > k}  \lambda_i}{n}.
\end{equation*}
\end{lemma}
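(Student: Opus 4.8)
The plan is to exploit the fact that $\kappa_0$ is determined by the single scalar equation \eqref{eqn:kappa} at $\delta = 0$: writing $F(\kappa) := \sum_i \frac{\lambda_i}{\lambda_i+\kappa}$, we have $F(\kappa_0) = n$, and since every summand is strictly decreasing in $\kappa$, the function $F$ is strictly decreasing. Hence, for any candidate value $\kappa^*$, showing $F(\kappa^*) \ge n$ proves $\kappa_0 \ge \kappa^*$, while showing $F(\kappa^*) < n$ proves $\kappa_0 < \kappa^*$. Each of the three bounds then follows by plugging the corresponding $\kappa^*$ into $F$, splitting the sum into a head ($i \le k$) and a tail ($i > k$), and estimating each piece; throughout I abbreviate $S_1 = \sum_{i>k}\lambda_i$ and $S_2 = \sum_{i>k}\lambda_i^2$, so that $r_k = S_1/\lambda_{k+1}$ and $R_k = S_1^2/S_2$.

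For the upper bound, take $\kappa^* = S_1/(n-k)$. Bounding each head term by $\frac{\lambda_i}{\lambda_i+\kappa^*} \le 1$ and each tail term by $\frac{\lambda_i}{\lambda_i+\kappa^*} < \lambda_i/\kappa^*$ (strict since $\kappa^* > 0$) gives $F(\kappa^*) < k + S_1/\kappa^* = n$, so $\kappa_0 < \kappa^* = (1-k/n)^{-1}S_1/n$. For the $R_k$ lower bound, take $\kappa^* = (1-n/R_k)S_1/n$ (the claim is vacuous if this is nonpositive, so assume it is positive); dropping the head terms and applying Cauchy-Schwarz to the tail yields $S_1^2 = \big(\sum_{i>k}\sqrt{\tfrac{\lambda_i}{\lambda_i+\kappa^*}}\sqrt{\lambda_i(\lambda_i+\kappa^*)}\big)^2 \le F(\kappa^*)\,(S_2 + \kappa^* S_1)$, and the chosen $\kappa^*$ is exactly the value making $S_1^2/(S_2+\kappa^* S_1) = n$, so $F(\kappa^*) \ge n$. (Equivalently and more directly, running the same Cauchy-Schwarz at $\kappa_0$ itself and using $\sum_{i>k}\frac{\lambda_i}{\lambda_i+\kappa_0} \le F(\kappa_0) = n$ gives $\kappa_0 \ge \frac{S_1}{n} - \frac{S_2}{S_1}$, which rearranges to the claim.)

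For the $r_k$ lower bound, take $\kappa^* = \lambda_{k+1}\big(\tfrac{k+r_k}{n}-1\big)$ (again vacuous when nonpositive). The key observation is that $g(\lambda) := \frac{\lambda}{\lambda+\kappa^*}$ is concave with $g(0) = 0$: for a head index $i \le k$ we have $\lambda_i \ge \lambda_{k+1}$ and $g$ increasing, so $g(\lambda_i) \ge g(\lambda_{k+1})$; for a tail index $i > k$ we have $0 \le \lambda_i \le \lambda_{k+1}$, so the chord bound gives $g(\lambda_i) \ge \frac{\lambda_i}{\lambda_{k+1}} g(\lambda_{k+1})$. Summing, $F(\kappa^*) \ge \big(k + S_1/\lambda_{k+1}\big)\frac{\lambda_{k+1}}{\lambda_{k+1}+\kappa^*} = (k+r_k)\frac{\lambda_{k+1}}{\lambda_{k+1}+\kappa^*}$, and the chosen $\kappa^*$ makes this exactly $n$, so $\kappa_0 \ge \kappa^*$.

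The algebraic verifications that each $\kappa^*$ turns the relevant estimate into an equality with $n$ are routine. The only genuine decisions are which elementary inequality to use for each bound — crude termwise bounds for the upper estimate, Cauchy-Schwarz for the $R_k$ bound, and the concavity/chord argument for the $r_k$ bound — together with a little care about the handful of degenerate cases (nonpositive $\kappa^*$, or too few nonzero eigenvalues for $\kappa_0$ to be strictly positive), all of which are either vacuous or immediate.
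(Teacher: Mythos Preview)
Your proof is correct and follows essentially the same route as the paper: Cauchy--Schwarz for the $R_k$ bound (your parenthetical ``direct'' version is verbatim the paper's argument), the termwise bounds $\tfrac{\lambda_i}{\lambda_i+\kappa_0}\le 1$ and $\tfrac{\lambda_i}{\lambda_i+\kappa_0}<\lambda_i/\kappa_0$ for the upper bound, and for the $r_k$ bound your chord inequality $g(\lambda_i)\ge(\lambda_i/\lambda_{k+1})g(\lambda_{k+1})$ unwinds to $\tfrac{\lambda_i}{\lambda_i+\kappa}\ge\tfrac{\lambda_i}{\lambda_{k+1}+\kappa}$, which is exactly the paper's tail estimate. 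The only cosmetic difference is that you package everything through the monotonicity of $F$ and a candidate $\kappa^*$, whereas the paper works directly at $\kappa_0$ and rearranges; the underlying inequalities are identical.
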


\begin{proof}
From the Cauchy-Schwarz inequality, we show that
\begin{equation*}
    \begin{split}
        \left(\sum_{i > k} \lambda_i \right)^2 
        &= \left(\sum_{i > k} \sqrt{\frac{\lambda_i}{\lambda_i + \kappa_0}} \sqrt{\lambda_i (\lambda_i + \kappa_0)}\right)^2 \\
        &\leq \left(\sum_{i > k} \frac{\lambda_i}{\lambda_i + \kappa_0} \right) \left( \sum_{i > k} \lambda_i (\lambda_i + \kappa_0) \right)\\
        &\leq \left(\sum_{i} \frac{\lambda_i}{\lambda_i + \kappa_0} \right) \left( \sum_{i > k} \lambda_i (\lambda_i + \kappa_0) \right)\\
        &= n \left( \sum_{i > k} \lambda_i^2  + \kappa_0 \sum_{i > k} \lambda_i  \right).\\
    \end{split}
\end{equation*}
Rearranging in terms of $\kappa_0$ proves the first inequality. Moreover, it holds that
\begin{align*}
    n &= \sum_{i \leq k} \frac{\lambda_i}{\lambda_i + \kappa_0} + \sum_{i > k} \frac{\lambda_i}{\lambda_i + \kappa_0} \\
    & \geq \frac{k \lambda_{k+1}}{\lambda_{k+1} + \kappa_0} + \frac{\sum_{i > k} \lambda_i}{\lambda_{k+1} + \kappa_0}.
\end{align*}
which can be rearranged to the second lower bound. Finally, observe that
\begin{equation*}
    n = \sum_i \frac{\lambda_i}{\lambda_i + \kappa_0} < k + \frac{ \sum_{i>k} \lambda_i}{\kappa_0}
\end{equation*}
and rearranging concludes the proof of the last inequality.
\end{proof}

In particular, when there exists $k$ such that $k = o(n)$ and $R_k = \omega(n)$, then $\kappa_0 \approx \sum_{i > k} \lambda_i /n $. Using lemma~\ref{lem:kappa-bound}, we can show Theorem~\ref{theorem:e0-upper-bound}.

\CoIUpperBound*

\begin{proof}
For any $\delta \geq 0$, by the definition~\eqref{eqn:kappa}, we have
\begin{equation*}
    \begin{split}
        n - \frac{\delta}{\kappa_{\delta}}
        &= \sum_{i} \frac{\lambda_i}{\lambda_i + \kappa_{\delta}}   \\
        &\leq \sum_{i \leq k} \frac{\lambda_i}{\lambda_i + \kappa_{\delta}}+ \sum_{i > k} \frac{\sqrt{\lambda_i}}{\lambda_i + \kappa_{\delta}} \sqrt{\lambda_i}\\
        &\leq k + \sqrt{\sum_{i > k} \frac{\lambda_i}{(\lambda_i + \kappa_{\delta})^2} \sum_{i > k}\lambda_i}.
    \end{split}
\end{equation*}
Rearranging, we get
\begin{equation} \label{eqn:some-lower-bound}
    \frac{\left( n - k - \frac{\delta}{\kappa_{\delta}} \right)^2}{\sum_{i > k}\lambda_i} \leq \sum_{i > k} \frac{\lambda_i}{(\lambda_i + \kappa_{\delta})^2}.
\end{equation}
At the same time, we can use the definition~\eqref{eqn:kappa} again and \eqref{eqn:some-lower-bound} to show that
\begin{equation} \label{eqn:inv-prefactor-bound}
    \begin{split}
        1 - \frac{1}{n} \sum_i \cL_{i, \delta}^2
        &= \frac{1}{n} \sum_i  \left[\frac{\lambda_i}{\lambda_i + \kappa_{\delta}} - \left( \frac{\lambda_i}{\lambda_i + \kappa_{\delta}} \right)^2  \right] + \frac{\delta}{n\kappa_{\delta}}  \\
        &= \frac{\kappa_{\delta}}{n} \sum_{i} \frac{\lambda_i }{(\lambda_i + \kappa_{\delta})^2} + \frac{\delta}{n\kappa_{\delta}}\\
        &\geq \frac{\kappa_{\delta}}{n} \frac{\left( n - k - \frac{\delta}{\kappa_{\delta}} \right)^2}{\sum_{i > k}\lambda_i} + \frac{\delta}{n\kappa_{\delta}}.
    \end{split}
\end{equation}
Plugging in $\delta = 0$ and Lemma~\ref{lem:kappa-bound}, we have
\[
\cE_0 = \left( 1 - \frac{1}{n} \sum_i \cL_{i, 0}^2 \right)^{-1} \leq \left( \frac{\kappa_{0}}{n} \frac{\left( n - k \right)^2}{\sum_{i > k}\lambda_i}  \right)^{-1} = \left( 1 - \frac{k}{n}\right)^{-2} \left( 1- \frac{n}{R_k}\right)^{-1}
\]    
provided that $R_k > n$.
\end{proof}

Using the second part of equation~\eqref{eqn:kappa-lower-bound}, we can show a similar bound that depends $r_k$, which is smaller than $R_k$, but has a better dependence on $k$.

\begin{theorem} \label{theorem:e0-upper-bound-2}
For any $k < n$, it holds that
\begin{equation*}
    \cE_0 \leq \left( 1 - \frac{k}{n}\right)^{-1} \left( 1 - \frac{n}{k+r_k} \right)_+^{-1}.
\end{equation*}
\end{theorem}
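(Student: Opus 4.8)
The plan is to bound $\frac1n\sum_i \cL_{i,0}^2$ head-to-tail, using only that $\sum_i \cL_{i,0} = n$ (the $\delta=0$ case of \eqref{eqn:kappa}) and that $\lambda\mapsto \lambda/(\lambda+\kappa_0)$ is increasing, and then to substitute the $r_k$-flavored lower bound on $\kappa_0$ from Lemma~\ref{lem:kappa-bound}. Write $\theta := \lambda_{k+1}/(\lambda_{k+1}+\kappa_0) \in [0,1]$ and split $\sum_i \cL_{i,0}^2 = \sum_{i\le k}\cL_{i,0}^2 + \sum_{i>k}\cL_{i,0}^2$. For the head I would bound $\cL_{i,0}^2 \le \cL_{i,0}$; for the tail, $\lambda_i \le \lambda_{k+1}$ for $i>k$ gives $\cL_{i,0}\le\theta$, so $\sum_{i>k}\cL_{i,0}^2 \le \theta\sum_{i>k}\cL_{i,0}$. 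Setting $S:=\sum_{i\le k}\cL_{i,0}$ (so $0\le S\le k$ and $\sum_{i>k}\cL_{i,0}=n-S$) and using $1-\theta\ge 0$, this yields $\sum_i\cL_{i,0}^2 \le S(1-\theta)+\theta n \le k(1-\theta)+\theta n$, hence $1-\frac1n\sum_i\cL_{i,0}^2 \ge (1-\theta)\left(1-\frac kn\right) = \frac{\kappa_0}{\lambda_{k+1}+\kappa_0}\left(1-\frac kn\right)$ and therefore
\[
\cE_0 \;\le\; \left(1-\frac kn\right)^{-1}\left(1+\frac{\lambda_{k+1}}{\kappa_0}\right).
\]

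To finish, I would plug in the second inequality of \eqref{eqn:kappa-lower-bound}, namely $\kappa_0 \ge \lambda_{k+1}\left(\frac{k+r_k}{n}-1\right)$. If $k+r_k>n$ this gives $\frac{\lambda_{k+1}}{\kappa_0}\le \frac{n}{k+r_k-n}$, so $1+\frac{\lambda_{k+1}}{\kappa_0}\le\frac{k+r_k}{k+r_k-n}=\left(1-\frac{n}{k+r_k}\right)^{-1}$, which is exactly the claimed bound; if $k+r_k\le n$ the right-hand side of the theorem is $+\infty$ (the $(\cdot)_+$ vanishes) and the statement is vacuous.

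The argument is short, and I do not expect a genuine obstacle: the only ``idea'' is the head/tail split combined with $\cL_{i,0}^2\le\cL_{i,0}$ and the uniform tail bound $\cL_{i,0}\le\theta$, after which the right estimate for $\kappa_0$ from Lemma~\ref{lem:kappa-bound} does the rest. Compared with Theorem~\ref{theorem:e0-upper-bound}, this route replaces the Cauchy--Schwarz step applied to $\sum_{i>k}\lambda_i/(\lambda_i+\kappa_0)^2$ by the cruder pointwise bound $\cL_{i,0}\le\theta$ on the tail; this is what trades the exponent $2$ on $(1-k/n)$ for exponent $1$ while replacing $R_k$ by the smaller quantity $k+r_k$.
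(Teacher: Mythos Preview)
Your proposal is correct and is essentially the paper's own argument, just phrased as an upper bound on $\sum_i \cL_{i,0}^2$ instead of a lower bound on $\cE_0^{-1}=\frac1n\sum_i \cL_{i,0}\,\frac{\kappa_0}{\lambda_i+\kappa_0}$: the paper drops the head terms (nonnegative), uses $\frac{\kappa_0}{\lambda_i+\kappa_0}\ge 1-\theta$ for $i>k$, and then $\sum_{i>k}\cL_{i,0}\ge n-k$, which is exactly your head/tail split after the identity $\sum_i\cL_{i,0}=n$. Both proofs then invoke the same $r_k$-type bound on $\kappa_0$ from Lemma~\ref{lem:kappa-bound} to turn $(1-\theta)^{-1}$ into $(1-n/(k+r_k))^{-1}$.
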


\begin{proof}
For $i \geq k + 1$, it holds that $\lambda_i \leq \lambda_{k+1}$ and so by Lemma~\ref{lem:kappa-bound}, we have
\[
\frac{\kappa_0}{\lambda_i + \kappa_0} \geq \frac{\kappa_0}{\lambda_{k+1} + \kappa_0} \geq \frac{\frac{k + r_k}{n} - 1}{\frac{k + r_k}{n} } = 1 - \frac{n}{k+r_k}.
\]
Finally, by equation~\eqref{eqn:kappa}, we have
\begin{align*}
    \cE_0^{-1} 
    &= \frac{1}{n}\sum_{i} \frac{\lambda_i}{\lambda_i + \kappa_0} \frac{\kappa_0}{\lambda_i + \kappa_0} \\
    &\geq \frac{1}{n}\sum_{i \geq k+1} \frac{\lambda_i}{\lambda_i + \kappa_0} \frac{\kappa_0}{\lambda_i + \kappa_0} \\
    &\geq \left( 1 - \frac{k}{n}\right) \left( 1 - \frac{n}{k+r_k} \right).
\end{align*}
Taking the inverse on both hand side concludes the proof.
\end{proof}

Finally, we prove Theorem~\ref{theorem:e0-tempered}. The proof goes through a different argument to avoid the dependence on $1-k/n$ because we might need to choose $k = \Omega(n)$ when overfitting is tempered.

\CoIUpperBoundTempered*

\begin{proof}
If $\epsilon \leq n/r_0$, then it is clear that $k =0$ satisfies $k + \epsilon r_k \leq n$. It is also clear that choosing $k \geq (1+\epsilon^{-1}) n$ satisfies $k + r_k \geq (1+\epsilon^{-1}) n $ because $r_k \geq 0$. Then both $k_l$ and $k_u$ are well-defined. To show that both are finite, we observe that $k_l \leq k_l + \epsilon r_{k_l} \leq n$ by definition and $k_u \leq (1+\epsilon^{-1}) n$ because it is defined as the minimum $k$.

Next, let $k^*$ be the smallest integer such that $\lambda_{k^*} \leq \epsilon \kappa_0$. We will show that $k^*$ is also well defined and $k^* \in [k_l+2, k_u + 1]$. Note that for any $k < n$, we can apply Lemma~\ref{lem:kappa-bound} to show
\[
\epsilon \kappa_0 < \epsilon \frac{\sum_{i > k} \lambda_i}{n-k} = \frac{\epsilon r_{k}}{n-k} \lambda_{k+1}.
\]
Therefore, by our definition of $k_l$ and $k^*$, it holds that $\lambda_{k_l+1} > \epsilon \kappa_0 \geq \lambda_{k^*}$. Since the eigenvalues are sorted, it must hold that $k^* > k_l + 1$. On the other hand, for any $k \in \N$, we also apply Lemma~\ref{lem:kappa-bound} to show
\[
\epsilon \kappa_0
\geq
\lambda_{k+1} \epsilon \left( \frac{k + r_{k}}{n} - 1 \right)
\]
By our definition of $k_u$ and $k^*$, it holds that $\lambda_{k_u + 1} \leq \epsilon \kappa_0$ and so $k^* \leq k_u + 1$. Finally, since we have $\lambda_i \leq \lambda_{k^*} \leq \epsilon \kappa_0$ for all $i \geq k^*$ and $\lambda_{k^*-1} > \epsilon \kappa_0$, we can check that
\begin{align*}
    \cE_0^{-1} 
    &= 1 - \frac{1}{n} \sum_i \cL_{i,0}^2 = \frac{\kappa_0}{n} \sum_i \frac{\lambda_i}{(\lambda_i + \kappa_0)^2} \\
    &\geq \frac{\kappa_0}{n} \sum_{i \geq k^*} \frac{\lambda_i}{(\lambda_i + \kappa_0)^2} \\
    &\geq \frac{1}{(1+\epsilon)^2} \frac{1}{n \kappa_0 } \sum_{i \geq k^*} \lambda_i > \frac{\epsilon}{(1+\epsilon)^2} \frac{1}{n } \frac{\sum_{i \geq k^*-1} \lambda_i - \lambda_{k^*-1}}{\lambda_{k^*-1}} \\
    &= \frac{\epsilon}{(1+\epsilon)^2} \frac{r_{k^*-2}-1}{n}.
\end{align*}

Recall that $k^* -1 \geq k_l + 1$ and so by definition of $k_l$, we have $k^* -1 + \epsilon r_{k^* -1} > n$. Therefore, it holds that
\begin{align*}
    \cE_0 
    &< \frac{(1+\epsilon)^2}{\epsilon} \frac{k^* -1 + \epsilon r_{k^* -1}}{r_{k^*-2}-1} \\
    &= (1+\epsilon)^2 \left[ \frac{\lambda_{k^*-1}}{\lambda_{k^*}} + \frac{1}{\epsilon} \frac{(k^* - 2) + 1}{r_{k^*-2}-1} \right].
\end{align*}
where in the last step we use
\begin{align*}
    r_{k^*-2}-1
    &= \frac{\sum_{i > k^* - 2} \lambda_i}{\lambda_{k^*-1}} - 1 = \frac{\sum_{i > k^* - 1} \lambda_i}{\lambda_{k^*-1}} \\
    &= \frac{\lambda_{k^*}}{\lambda_{k^*-1}} r_{k^* - 1}.
\end{align*}
The rest follows from the fact that $k^* -2 \in [k_l, k_u - 1]$.
\end{proof}

\subsection{Lower Bounds}

We will now prove two lower bound for $\cE_0$.

\CoILowerBound*

\begin{proof}

First, suppose that there exists $k < n$ such that $n \leq k + b r_k$ and let $k$ be the first such integer. Then we can rearrange $n \leq k + b r_k$ into
\[
\lambda_{k+1} \leq b \frac{\sum_{i>k} \lambda_i}{n-k},
\]
and since $\lambda_i \leq \lambda_{k+1}$ for $i > k$, we apply the above and equation \eqref{eqn:kappa-lower-bound} of Lemma~\ref{lem:kappa-bound} to show that
\begin{align*}
    \sum_i \cL_{i, 0}^2 
    &\geq 
    \sum_{i > k} \left( \frac{\lambda_i}{\lambda_i + \kappa_0} \right)^2 \\
    &\geq \frac{\sum_{i>k} \lambda_i^2}{\left( b \frac{\sum_{i>k} \lambda_i}{n-k} +  \frac{\sum_{i>k} \lambda_i}{n-k}\right)^2}
    = \frac{n}{(b+1)^2} \left( 1- \frac{k}{n}\right)^2 \frac{n}{R_k}.
\end{align*}

Moreover, by the definition of $k$, we must have $n > k-1 + b r_{k-1}$ which can be rearranged to
\[
\lambda_k > b \frac{\sum_{i> k -1} \lambda_i}{n-k+1} \geq b \kappa_0
\]
by equation \eqref{eqn:kappa-lower-bound} of Lemma~\ref{lem:kappa-bound} again. Then for any $i \leq k$, we have $\lambda_{i} \geq \lambda_{k} > b \kappa_0$ and so $\kappa_0 < \lambda_i/b$. Therefore, we have
\[
\sum_i \cL_{i, 0}^2
\geq 
\sum_{i \leq k } \left( \frac{\lambda_{i}}{\lambda_{i} + \kappa_0} \right)^2
\geq 
k \left(\frac{b}{b+1}\right)^2.
\]
Finally, if there is no such $k$, then the first inequality is trivial. Moreover, we have $n > n-1 + br_{n-1}$ which rearranges to $\lambda_{n} \geq b \sum_{i > n-1} \lambda_i > b \kappa_0$. Therefore, by all $i \leq n$, we have $\lambda_i \geq \lambda_n > b \kappa_0$ and the rest of the proof is the same.
\end{proof}

\CoICatastrophic*

\begin{proof}
By the Cauchy-Schwarz inequality, we have
\begin{equation*}
    \begin{split}
        n 
        &= \sum_{i>k} \frac{\lambda_i}{\lambda_i + \kappa_0} + \sum_{i\leq k} \frac{\lambda_i}{\lambda_i + \kappa_0} \\
        &\leq \frac{\sum_{i>k} \lambda_i}{\kappa_0} + \sqrt{k} \sqrt{\sum_{i \leq k} \left( \frac{\lambda_i}{\lambda_i + \kappa_0} \right)^2}.
    \end{split}
\end{equation*}
By Lemma~\ref{lem:kappa-bound}, we have $\kappa_0 \geq \lambda_{k+1} \left( \frac{k+r_k}{n} - 1 \right)$. Combine with above, we obtain
\[
n \leq \frac{n r_k}{k+r_k-n} + \sqrt{k} \sqrt{\sum_{i \leq k} \left( \frac{\lambda_i}{\lambda_i + \kappa_0} \right)^2}.
\]
Rearranging gives us
\[
\frac{n }{\sqrt{k}} \frac{k-n}{k+r_k-n}\leq \sqrt{\sum_{i \leq k} \left( \frac{\lambda_i}{\lambda_i + \kappa_0} \right)^2},
\]
which implies that
\[
\frac{1}{n} \sum_{i} \cL_{i,0}^2 \geq \frac{1}{n}\sum_{i \leq k} \left( \frac{\lambda_i}{\lambda_i + \kappa_0} \right)^2 \geq \frac{n}{k} \left( \frac{k-n}{k+r_k-n} \right)^2.
\]
\end{proof}

\subsection{Taxonomy of Overfitting}

\taxonomy*

\begin{proof}

We will show each clause separately. 

\begin{enumerate}[label = (\alph*)]
    \item For any $\epsilon > 0$, we can pick $k = \epsilon n$ in Theorem~\ref{theorem:e0-upper-bound} and obtain the following:
\begin{equation*}
    \cE_0 \leq  \frac{1}{(1-\epsilon)^2} \left( 1 - \frac{1}{\epsilon} \frac{k}{R_k} \right)^{-1}.
\end{equation*}
Since we have
\[
\sum_{i>k} \lambda_i^2
\leq 
\lambda_{k+1} \sum_{i>k} \lambda_i
\implies R_k \geq r_k,
\]
we can send $n \to \infty$ and $k/R_k \leq k/r_k \to 0$. Therefore, it holds that
\[
\lim_{n \to \infty} \cE_0 \leq \frac{1}{(1-\epsilon)^2}.
\]
Since the choice of $\epsilon > 0$ can be made arbitrarily small, we have the desired conclusion by taking $\epsilon \to 0$. 

\item If $\{k/r_k\}$ converges to a non-zero constant, then the sequence must be bounded. In particular, there exists $M > 0$ such that $r_k < kM$ for all $k$. If we let $b = 1/(3M)$ in Theorem~\ref{theorem:e0-lower-bound}, then for all $k \leq n/2$, it holds that
\[
k+br_k < k(1+bM) \leq \frac{1+bM}{2} n \leq \frac{2n}{3} < n.
\]
Then we need to choose $k > n/2$ in Theorem~\ref{theorem:e0-lower-bound} and
\[
\frac{1}{n} \sum_i \cL_{i, 0}^2 \geq \frac{1}{2(1+3M)^2}
\]
and so $\lim_{n \to \infty} \cE_0 > 1.$ 

Similarly, there also exists $m > 0$ such that $r_k > mk$ for all $k$. Then by choosing $k = \sqrt{\frac{1}{1+m}} n$ and Theorem~\ref{theorem:e0-upper-bound-2}, we have
\begin{equation*}
    \cE_0 \leq \left( 1 - \frac{k}{n}\right)^{-1} \left( 1 - \frac{1}{1+m} \frac{n}{k} \right)^{-1} = \left( 1 - \frac{1}{\sqrt{1+m}}\right)^{-2} < \infty.
\end{equation*}

\item We will apply Theorem~\ref{theorem:e0-catastrophic}. For any $\epsilon > 0$, choose $k = (1+\epsilon)n$, we get
\[
\frac{1}{n} \sum_{i} \cL_{i,0}^2 \geq \frac{1}{1+\epsilon} \left( 1 - \frac{r_k}{k} \frac{1+\epsilon}{\epsilon} \right)^2 
\]
Therefore, if $r_k = o(k)$, then
\[
\lim_{n \to \infty} \frac{1}{n} \sum_{i} \cL_{i,0}^2 \geq  \frac{1}{1+\epsilon}
\]
However, since the choice of $\epsilon$ is arbitrary, then we can send $\epsilon \to 0$. The desired conclusion follows by $\cE_0 = \left( 1 - \frac{1}{n} \sum_{i} \cL_{i,0}^2  \right)^{-1}$.
\end{enumerate}
\end{proof}

\begin{remark}
    As mentioned in the main text, it is also possible to use Theorem~\ref{theorem:e0-tempered} to show the upper bounds in the proof of Theorem~\ref{thm:trichotomy} above. For simplicity, we use a different argument here by applying Theorem~\ref{theorem:e0-upper-bound} and \ref{theorem:e0-upper-bound-2}.
\end{remark}

\section{Uniform Convergence}

In this appendix, we show that the predictions from \cite{eigenlearning} can establish a type of uniform convergence guarantee known as ``optimistic rate'' \citep{ panchenkooptimistic, srebro2010optimistic} along the ridge path, which maybe of independent interest. We briefly mention the uniform convergence result in section~\ref{sec:inner-product} of the main text.

In particular, the tight result from \citet{optimistic-rates} avoids any hidden multiplicative constant and logarithmic factor present in previous works and can be used to establish benign overfitting. However, their proof techniques depend on the Gaussian Minimax Theorem (GMT) and are limited to the setting of Gaussian features. We recover their result in Theorem~\ref{theorem:krr-optimistic-rate} here with a (non-rigorous) calculation that extends beyond the Gaussian case.

\subsection{Formula for Training Error and Hilbert Norm}

We first provide closed-form expression for the training error and Hilbert norm of $\hat{f}_{\delta}$. By the predictions from \cite{eigenlearning}, we know that
\begin{equation*}
    \hat{R} (\hat{f}_{\delta}) = \frac{\delta^2}{n^2 \kappa_{\delta}^2} \tilde{R}(\hat{f}_{\delta})
\end{equation*}
and we can use section 4.1 of \cite{eigenlearning} to compute the expected Hilbert norm:
\begin{align*}
    \E \| \hat{f}_{\delta} \|_{\cH}^2 
    &= \sum_{i} \frac{\E[\hat{v}_i^2]}{\lambda_i} = \sum_{i} \frac{\E[\hat{v}_i]^2 + \Var[\hat{v}_i]}{\lambda_i}\\
    &= \sum_i \frac{\L_{i, \delta}^2 v_i^2 + \frac{\L_{i, \delta}^2 \tilde{R}(\hat{f}_{\delta})}{n} }{\lambda_i} \\
    &= \sum_i \frac{\L_{i,\delta}^2 v_i^2}{\lambda_i} + \frac{ \tilde{R}(\hat{f}_{\delta}) }{n} \sum_i \frac{\L_{i, \delta}^2  }{\lambda_i}.
\end{align*}

Therefore, we will just use the expression:
\begin{equation} \label{eqn:KRR-RKHS-norm}
    \| \hat{f}_{\delta} \|_{\cH}^2
    = \sum_i \frac{\lambda_i v_i^2}{(\lambda_i + \kappa_{\delta})^2}
        + \frac{ \tilde{R}(\hat{f}_{\delta}) }{n} \sum_i \frac{\lambda_i }{(\lambda_i + \kappa_{\delta})^2}.
\end{equation}

\subsection{Optimistic Rate}

\KrrOptimisticRate*

\begin{proof}
Applying equation \eqref{eqn:KRR-risk} and \eqref{eqn:kappa}, we can write the difference
\begin{equation*}
    \begin{split}
        \sqrt{\tilde{R} (\hat{f}_{\delta})} - \sqrt{\hat{R} (\hat{f}_{\delta})} 
        &= \left( 1 - \frac{\delta}{n \kappa_{\delta}} \right) \sqrt{\tilde{R} (\hat{f}_{\delta})} \\
        &\leq \left(\frac{1}{n} \sum_{i} \frac{\lambda_i}{ \lambda_i + \kappa_{\delta}} \right)  \sqrt{\tilde{R} (\hat{f}_{\delta})}.
    \end{split}
\end{equation*}

By the Cauchy-Schwarz inequality, for any $k \in \N$, we have
\begin{equation*}
    \begin{split}
        \left(\sum_{i} \frac{\lambda_{i}}{ \lambda_i + \kappa_{\delta}} \right)^2 
        &\leq \left(k + \sum_{i > k} \frac{\lambda_i}{ \lambda_i + \kappa_{\delta}} \right)^2 \\
        &= k^2 + 2 k \left(\sum_{i > k} \frac{\lambda_i}{ \lambda_i + \kappa_{\delta}} \right) + \left(\sum_{i > k} \frac{\sqrt{\lambda_i}}{ \lambda_i + \kappa_{\delta}} \sqrt{\lambda_i}\right)^2 \\
        &\leq k^2 + 2kn + \left(\sum_{i > k} \frac{\lambda_i}{ (\lambda_i + \kappa_{\delta})^2} \right) \left( \sum_{i > k} \lambda_i \right)
    \end{split}
\end{equation*}
By the expression \eqref{eqn:KRR-RKHS-norm}, we have
\begin{equation*}
    \begin{split}
        \left( \sqrt{\tilde{R} (\hat{f}_{\delta})} - \sqrt{\hat{R} (\hat{f}_{\delta})}  \right)^2
        &\leq \frac{k^2 + 2kn}{n^2} \tilde{R} (\hat{f}_{\delta}) + \left( \frac{\tilde{R} (\hat{f}_{\delta})}{n} \sum_{i > k} \frac{\lambda_i}{ (\lambda_i + \kappa_{\delta})^2} \right) \left( \frac{1}{n} \sum_{i > k} \lambda_i \right) \\
        &\leq \frac{k^2 + 2kn}{n^2} \tilde{R} (\hat{f}_{\delta}) + \frac{\| \hat{f}_{\delta} \|_{\cH}^2 (\sum_{i > k} \lambda_i ) }{n}
    \end{split}
\end{equation*}
then using $\sqrt{x+y} \leq \sqrt{x} + \sqrt{y}$, we show that
\begin{equation*}
    \begin{split}
        \sqrt{\tilde{R} (\hat{f}_{\delta})} - \sqrt{\hat{R} (\hat{f}_{\delta})}   
        &\leq \sqrt{\frac{k^2 + 2kn}{n^2} \tilde{R} (\hat{f}_{\delta}) + \frac{ \| \hat{f}_{\delta} \|_{\cH}^2 (\sum_{i > k} \lambda_i ) }{n}} \\
        &\leq \sqrt{\frac{k^2 + 2kn}{n^2} \tilde{R} (\hat{f}_{\delta})} + \sqrt{ \frac{ \| \hat{f}_{\delta} \|_{\cH}^2 (\sum_{i > k} \lambda_i ) }{n}}. \\
    \end{split}
\end{equation*}
Rearranging concludes the proof.
\end{proof}

\subsection{Norm Analysis} \label{sec:norm-bounds}

\begin{restatable}{theorem}{KrrNormBound}
\label{theorem:krr-norm-bound}
    For any $l \in \N \cup \{ \infty\}$ and $k \in \N$ such that $R_k > n$, it holds that
    \begin{equation*}
    \| \hat{f}_0 \|_{\cH}^2 \leq 
    \sum_{i \leq l} \frac{v_i^2}{\lambda_i} + \left( 1 - \frac{n}{R_k}\right)^{-1} \frac{n\left( \sigma^2 + \sum_{i > l} v_i^2 \right)}{\sum_{i > k} \lambda_i} .
    \end{equation*}
\end{restatable}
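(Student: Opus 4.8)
The plan is to first derive a clean closed form for $\|\hat f_0\|_\cH^2$ from the formula \eqref{eqn:KRR-RKHS-norm}, and only then split the resulting sum and invoke Lemma~\ref{lem:kappa-bound}. Setting $\delta = 0$ in \eqref{eqn:KRR-RKHS-norm}, the two ingredients I need are already established in the proof of Theorem~\ref{theorem:e0-upper-bound}: the identity $1 - \tfrac1n\sum_i \cL_{i,0}^2 = \tfrac{\kappa_0}{n}\sum_i \tfrac{\lambda_i}{(\lambda_i+\kappa_0)^2}$, which together with $\cE_0 = (1-\tfrac1n\sum_i\cL_{i,0}^2)^{-1}$ gives $\sum_i \tfrac{\lambda_i}{(\lambda_i+\kappa_0)^2} = \tfrac{n}{\kappa_0\cE_0}$; and the expansion $\tilde R(\hat f_0) = \cE_0\big(\sum_i (1-\cL_{i,0})^2 v_i^2 + \sigma^2\big)$ from \eqref{eqn:KRR-risk}, where $1 - \cL_{i,0} = \tfrac{\kappa_0}{\lambda_i+\kappa_0}$.

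Substituting these into the second term of \eqref{eqn:KRR-RKHS-norm} yields
\[
\frac{\tilde R(\hat f_0)}{n}\sum_i \frac{\lambda_i}{(\lambda_i+\kappa_0)^2} = \frac{\tilde R(\hat f_0)}{\kappa_0 \cE_0} = \frac{1}{\kappa_0}\Big(\sum_i \frac{\kappa_0^2 v_i^2}{(\lambda_i+\kappa_0)^2} + \sigma^2\Big) = \sum_i \frac{\kappa_0 v_i^2}{(\lambda_i+\kappa_0)^2} + \frac{\sigma^2}{\kappa_0},
\]
so that adding the first term of \eqref{eqn:KRR-RKHS-norm} the $v_i^2$ contributions combine as $\sum_i \tfrac{(\lambda_i+\kappa_0)v_i^2}{(\lambda_i+\kappa_0)^2}$, giving the identity $\|\hat f_0\|_\cH^2 = \sum_i \tfrac{v_i^2}{\lambda_i+\kappa_0} + \tfrac{\sigma^2}{\kappa_0}$. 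From here I split the sum at index $l$: for $i \leq l$ bound $\lambda_i + \kappa_0 \geq \lambda_i$, and for $i > l$ bound $\lambda_i + \kappa_0 \geq \kappa_0$, obtaining $\|\hat f_0\|_\cH^2 \leq \sum_{i\leq l}\tfrac{v_i^2}{\lambda_i} + \tfrac{\sigma^2 + \sum_{i>l}v_i^2}{\kappa_0}$. Finally, since $R_k > n$, the first lower bound of Lemma~\ref{lem:kappa-bound} gives $\tfrac1{\kappa_0} \leq \big(1 - \tfrac{n}{R_k}\big)^{-1}\tfrac{n}{\sum_{i>k}\lambda_i}$, which I plug in to conclude. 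The case $l = \infty$ is just the limiting case where the second group of indices is empty (so $\sum_{i>l}v_i^2 = 0$).

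The argument is essentially mechanical once the closed form $\|\hat f_0\|_\cH^2 = \sum_i \tfrac{v_i^2}{\lambda_i+\kappa_0} + \tfrac{\sigma^2}{\kappa_0}$ is in hand; the only genuinely delicate point is making sure the bias term $\sum_i \tfrac{\lambda_i v_i^2}{(\lambda_i+\kappa_0)^2}$ and the $v_i^2$-part of the variance term are combined \emph{before} any bounding, since bounding them separately (e.g.\ via AM--GM on $(\lambda_i+\kappa_0)^2 \geq 4\lambda_i\kappa_0$) would lose constants and not match the stated inequality. So the "hard part" is really just the bookkeeping that makes the $\kappa_0$-dependence collapse cleanly.
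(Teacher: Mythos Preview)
Your proposal is correct and matches the paper's proof essentially step for step: derive the clean identity $\|\hat f_0\|_\cH^2 = \sum_i \tfrac{v_i^2}{\lambda_i+\kappa_0} + \tfrac{\sigma^2}{\kappa_0}$ by combining the bias and variance $v_i^2$-terms before any bounding, then split at $l$ and apply Lemma~\ref{lem:kappa-bound}. The only cosmetic difference is that you cite the identity $\cE_0^{-1} = \tfrac{\kappa_0}{n}\sum_i \tfrac{\lambda_i}{(\lambda_i+\kappa_0)^2}$ from the proof of Theorem~\ref{theorem:e0-upper-bound}, whereas the paper re-derives it inline.
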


\begin{proof}
When $\delta = 0$, it holds that 
\begin{equation*}
    \begin{split}
        \frac{n}{\cE_0} &= n - \sum_i \cL_{i, 0}^2 = \sum_i \frac{\lambda_i}{\lambda_i + \kappa_0} - \frac{\lambda_i^2}{(\lambda_i + \kappa_0)^2} \\
        &= \kappa_0 \left( \sum_i \frac{\lambda_i}{(\lambda_i + \kappa_0)^2} \right)
    \end{split}
\end{equation*}
by applying \eqref{eqn:learnability} and \eqref{eqn:kappa}. Therefore, the second term in \eqref{eqn:KRR-RKHS-norm} can be simplified as 
\begin{equation*}
    \begin{split}
        \frac{ \tilde{R}(\hat{f}_0)}{n} \sum_i \frac{\lambda_i }{(\lambda_i + \kappa_0)^2}
        &= \frac{ \cE_0 \left( \sum_{i} (1 - \cL_{i, 0})^2 v_i^2 + \sigma^2 \right)}{n} \frac{n}{\cE_0 \kappa_0}\\
        &= \sum_{i}  \frac{(1 - \cL_{i,0})^2  }{\kappa_0} v_i^2 + \frac{\sigma^2}{\kappa_0} \\
        &= \sum_{i}  \frac{\kappa_0 }{(\lambda_i + \kappa_0)^2} v_i^2 + \frac{\sigma^2}{\kappa_0} \\
    \end{split}
\end{equation*}
by the definition in \eqref{eqn:KRR-risk}. Plugging in, we arrive at
\begin{equation*}
    \| \hat{f}_0 \|_{\cH}^2 =  \sum_{i} \frac{v_i^2}{\lambda_i + \kappa_0}  + \frac{\sigma^2}{\kappa_0}
\end{equation*}
To handle situations where $f^*$ is not in the RKHS, observe that for any $l$, we have
\begin{equation*}
    \begin{split}
         \sum_{i} \frac{v_i^2}{\lambda_i + \kappa_0}
         &= \sum_{i \leq l} \frac{v_i^2}{\lambda_i + \kappa_0} + \sum_{i>l} \frac{v_i^2}{\lambda_i + \kappa_0} \\
         &\leq  \sum_{i \leq l} \frac{v_i^2}{\lambda_i} + \frac{1}{\kappa_0}\sum_{i > l} v_i^2 \\
    \end{split}
\end{equation*}
and so
\begin{equation*}
\| \hat{f}_0 \|_{\cH}^2 \leq 
\sum_{i \leq l} \frac{v_i^2}{\lambda_i} + \frac{1}{\kappa_0} \left( \sigma^2 + \sum_{i > l} v_i^2 \right).
\end{equation*}
The proof concludes by plugging in Lemma~\ref{lem:kappa-bound}.
\end{proof}

Finally, we can plug in the norm bound of Theorem~\ref{theorem:krr-norm-bound} into Theorem~\ref{theorem:krr-optimistic-rate} to establish benign overfitting, as in \citet{uc-interpolators, moreau-envelope}.

\begin{corollary} \label{corollary:krr-min-norm}
For any $l \in \N \cup \{ \infty \}$ and $k \in \N$ such that $(k/n)^2 + 2 (k/n) < 1$ and $R_k > n$. Let $\epsilon = \sqrt{(k^2 + 2kn)/n^2}$, then it holds that
\begin{equation} \label{eqn:krr-risk-bound}
    (1-\epsilon)^2 \tilde{R}(\hat{f}_{0}) \leq  \frac{ \left( \sum_{i > k} \lambda_i \right) \left( \sum_{i \leq l} \frac{v_i^2}{\lambda_i} \right)}{n}  + \left( 1 - \frac{n}{R_k}\right)^{-1} \left( \sigma^2 + \sum_{i > l} v_i^2 \right) .
\end{equation}
\end{corollary}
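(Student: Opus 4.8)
The plan is to simply compose Theorem~\ref{theorem:krr-optimistic-rate}, specialized to $\delta = 0$, with the norm bound of Theorem~\ref{theorem:krr-norm-bound}. First I would observe that the ridgeless solution interpolates, so $\hat{R}(\hat{f}_0) = 0$; equivalently this follows from the identity $\hat{R}(\hat{f}_\delta) = \tfrac{\delta^2}{n^2 \kappa_\delta^2}\, \tilde{R}(\hat{f}_\delta)$ by setting $\delta = 0$. Plugging $\delta = 0$ and $\hat{R}(\hat{f}_0) = 0$ into Theorem~\ref{theorem:krr-optimistic-rate}, and noting that the $\epsilon$ in the corollary's statement is exactly the quantity $\epsilon = \sqrt{(k^2+2kn)/n^2}$ appearing there, yields
\[
(1-\epsilon)\sqrt{\tilde{R}(\hat{f}_0)} \leq \sqrt{\frac{\left(\sum_{i>k}\lambda_i\right)\|\hat{f}_0\|_{\cH}^2}{n}}.
\]

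Next I would invoke the hypothesis $(k/n)^2 + 2(k/n) < 1$, which is precisely the statement that $\epsilon < 1$, so the left-hand side above is nonnegative and we may square both sides to obtain $(1-\epsilon)^2\, \tilde{R}(\hat{f}_0) \leq \tfrac{1}{n}\left(\sum_{i>k}\lambda_i\right)\|\hat{f}_0\|_{\cH}^2$. Then I would apply Theorem~\ref{theorem:krr-norm-bound}, which is legitimate since we assumed $R_k > n$, to replace $\|\hat{f}_0\|_{\cH}^2$ by $\sum_{i\leq l}\frac{v_i^2}{\lambda_i} + \left(1 - \frac{n}{R_k}\right)^{-1}\frac{n(\sigma^2 + \sum_{i>l}v_i^2)}{\sum_{i>k}\lambda_i}$. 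Substituting, the prefactor $\tfrac{1}{n}\sum_{i>k}\lambda_i$ distributes over the two summands: against the first it gives $\frac{(\sum_{i>k}\lambda_i)(\sum_{i\leq l}v_i^2/\lambda_i)}{n}$, and against the second it cancels the $\frac{n}{\sum_{i>k}\lambda_i}$ factor, leaving $\left(1 - \frac{n}{R_k}\right)^{-1}(\sigma^2 + \sum_{i>l}v_i^2)$, which is exactly \eqref{eqn:krr-risk-bound}.

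There is essentially no technical obstacle: all the analytic work has already been carried out in Theorems~\ref{theorem:krr-optimistic-rate} and~\ref{theorem:krr-norm-bound}, and this corollary is just their composition. The only points needing a moment of care are (i) checking that the two occurrences of $\epsilon$ genuinely coincide, so that the specialization of Theorem~\ref{theorem:krr-optimistic-rate} applies verbatim; (ii) verifying that the two hypotheses of the corollary, $\epsilon < 1$ and $R_k > n$, are precisely what is needed — the former to square the optimistic-rate inequality without flipping it, the latter to invoke the norm bound; and (iii) the bookkeeping in the final substitution so that the $\sum_{i>k}\lambda_i$ factors combine correctly. The conceptual takeaway worth emphasizing is that the target enters the final bound only through $\sum_{i\leq l} v_i^2/\lambda_i$, the squared RKHS norm of the ``head'' of $f^*$, and $\sum_{i>l} v_i^2$, its unlearned tail, so choosing $l$ to keep the former finite while driving the latter small recovers benign overfitting even in the misspecified/agnostic regime.
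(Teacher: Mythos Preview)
Your proposal is correct and follows exactly the approach the paper indicates: plug the norm bound of Theorem~\ref{theorem:krr-norm-bound} into Theorem~\ref{theorem:krr-optimistic-rate} specialized to $\delta=0$, square, and simplify. The paper does not give any additional detail beyond what you have written.
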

\end{document}